%% file: main.tex
\documentclass{article} %
\usepackage{iclr2027_conference,times}

\usepackage[T1]{fontenc}
\usepackage{microtype}
\usepackage{amsmath}
\usepackage{amssymb}
\usepackage{amsthm}
\usepackage{mathtools}
\usepackage{booktabs}
\usepackage{graphicx}
\usepackage{subcaption}
\usepackage{float}
\usepackage{needspace}
\usepackage{tikz}
\usepackage{array}
\usepackage{siunitx}
\usepackage{xurl}
\usepackage{parskip}
\usepackage[colorlinks=true,citecolor=blue,linkcolor=blue,urlcolor=blue]{hyperref}
\usepackage{cleveref}
\input{figs/style}
\input{experiments/results/conceptual_constants}

\newtheorem{theorem}{Theorem}
\newtheorem{proposition}{Proposition}

\theoremstyle{definition}
\newtheorem{definition}{Definition}

\newtheorem{example}{Example}
\theoremstyle{remark}

\crefname{assumption}{Assumption}{Assumptions}
\Crefname{assumption}{Assumption}{Assumptions}
\crefname{example}{Example}{Examples}
\Crefname{example}{Example}{Examples}
\AddToHook{cmd/appendix/before}{\crefalias{section}{appendix}}
\AddToHook{cmd/appendix/before}{\crefalias{subsection}{appendix}}

\DeclareMathOperator*{\argmax}{arg\,max}

\newcommand{\Atrain}{\mathcal{A}^{\mathrm{train}}}
\newcommand{\Aeval}{\mathcal{A}^{\mathrm{eval}}}
\newcommand{\E}{\mathbb{E}}
\newcommand{\X}{\mathcal{X}}
\newcommand{\Y}{\mathcal{Y}}
\newcommand{\AFAITD}{\textbf{AFA-ITD}}

\definecolor{methodaaco}{HTML}{BB5300}
\DeclareRobustCommand{\AACO}{\textcolor{methodaaco}{\textup{AACO}}}
\definecolor{methoddime}{HTML}{1777AE}
\DeclareRobustCommand{\DIME}{\textcolor{methoddime}{\textup{DIME}}}
\definecolor{methodgdfs}{HTML}{00825F}
\DeclareRobustCommand{\GDFS}{\textcolor{methodgdfs}{\textup{GDFS}}}
\definecolor{methodjafa}{HTML}{882255}
\DeclareRobustCommand{\JAFA}{\textcolor{methodjafa}{\textup{JAFA}}}
\definecolor{methodol}{HTML}{976800}
\DeclareRobustCommand{\OL}{\textcolor{methodol}{\textup{OL}}}
\definecolor{methododin}{HTML}{6F69AF}
\DeclareRobustCommand{\ODIN}{\textcolor{methododin}{\textup{ODIN}}}
\title{Active Feature Acquisition With Incomplete Training Data}
\author{Reza Rezvan, Valter Sch\"utz, Han Wu, Linus Aronsson, Morteza Haghir Chehreghani \\
Department of Computer Science and Engineering \\
Chalmers University of Technology \& University of Gothenburg \\
{\small\texttt{\{rezvan,valter.schutz,hanwu,linaro,morteza.chehreghani\}@chalmers.se}}
}

\iclrfinalcopy%
\begin{document}

\maketitle
\lhead{Preprint}
\begin{abstract}
	In many prediction tasks, acquiring all features can be a prohibitively expensive or outright impossible task.
	Further, in many cases a static subset of features may not be enough to solve the problem sufficiently across various instances.
	Active Feature Acquisition (AFA) addresses these problems by formalizing the trade-off between feature cost and predictive performance during sequential feature selection.
	However, prior AFA work largely assumes access to \emph{complete training data}, an assumption that is often violated in practice.
	Here we study \textbf{AFA} with \textbf{I}ncomplete \textbf{T}raining \textbf{D}ata (\textbf{\AFAITD}), showing that under missing completely at random (MCAR) data, one-step acquisition values remain unchanged, whereas multi-step values can decrease.
	We analyze three approaches to learning from incomplete data: \emph{aliasing, filtering, and generative restoration}.
	We show that filtering can require a number of training instances scaling exponentially with the dimension, whereas generative restoration scales exponentially with the acquisition budget.
	We empirically test our theory on a controlled experiment and across common AFA datasets and find that missingness mainly damages methods that exploit multi-step acquisitions and that generative restoration is able to recover lost performance in many experiments.
	Code is available at \url{https://github.com/Linusaronsson/AFA-Benchmark/tree/missing-data}.
\end{abstract}

\input{sections/introduction}
\input{sections/related_work}
\input{sections/background}
\input{sections/theory}
\input{sections/experiments}
\Needspace{10\baselineskip}
\input{sections/conclusion}

\subsubsection*{Acknowledgments}
The work of Valter Schütz and Morteza Haghir Chehreghani was partially supported by the Swedish Research Council VR (grant number 2023-04809).
The work of Linus Aronsson and Morteza Haghir Chehreghani was partially supported by the Wallenberg AI, Autonomous Systems and Software Program (WASP) funded by the Knut and Alice Wallenberg Foundation.
Finally, the computations and data handling were enabled by resources provided by the National Academic Infrastructure for Supercomputing in Sweden (NAISS).

\bibliographystyle{iclr2027_conference}
\bibliography{references}

\input{sections/appendix}

\end{document}

%% file: figs/style.tex
\usetikzlibrary{positioning,calc,fit,arrows.meta,decorations.pathreplacing}
\definecolor{dirblue}{HTML}{4C92D9}   %
\definecolor{agnwarm}{HTML}{B5479B}   %
\definecolor{gengreen}{HTML}{0E7A4A}  %
\definecolor{inkdark}{HTML}{0B0B0B}   %
\definecolor{inkmuted}{HTML}{52514E}  %
\definecolor{figgrid}{HTML}{D8D7D2}   %
\tikzset{
matrixbed/.style={fill=figgrid, draw=none},
obscell/.style={fill=white, draw=none},
gencell/.style={fill=gengreen!55, draw=gengreen, line width=0.25pt},
region/.style={draw=#1, line width=0.7pt, rounded corners=1pt},
evalmark/.style={circle, fill=#1, draw=none, inner sep=0pt, minimum size=3.1pt},
trainmark/.style={circle, fill=white, draw=#1, line width=0.6pt,
		inner sep=0pt, minimum size=3.3pt},
axisrule/.style={draw=inkmuted, line width=0.4pt},
tickrule/.style={draw=figgrid, line width=0.4pt},
rowlabel/.style={font=\scriptsize, text=inkdark, align=right, anchor=east},
paneltitle/.style={font=\scriptsize, text=inkdark, align=center},
rolecell/.style={fill=white, draw=figgrid, line width=0.5pt},
noisecell/.style={fill=figgrid, draw=figgrid, line width=0.5pt},
rolearrow/.style={draw=inkmuted, line width=0.45pt,
-{Straight Barb[length=2.2pt,width=2.6pt]}},
brace/.style={draw=inkmuted, line width=0.45pt, decorate,
		decoration={brace, amplitude=2.2pt, raise=1pt}},
figlabel/.style={font=\tiny, text=inkdark, align=center, inner sep=1pt},
figmuted/.style={figlabel, text=black},
}

%% file: experiments/results/conceptual_constants.tex
\newcommand{\peQEvalContextOne}{0.50}
\newcommand{\peQEvalContextTwo}{1.00}
\newcommand{\peQTrainContextTwo}{0.50}
\newcommand{\peQEvalShortcutOne}{0.75}
\newcommand{\peQEvalShortcutTwo}{0.75}
\newcommand{\avN}{6}
\newcommand{\avD}{6}

\newcommand{\avP}{0.3}
\newcommand{\avComplete}{1}

\newcommand{\avObservedCells}{1/1, 1/2, 1/4, 1/5, 1/6, 2/1, 2/4, 2/5, 3/1, 3/2, 3/3, 3/4, 3/5, 4/2, 4/3, 4/4, 4/5, 4/6, 5/2, 5/5, 5/6, 6/1, 6/2, 6/3, 6/4, 6/5, 6/6}
\newcommand{\avCompleteRows}{6}

%% file: sections/introduction.tex
\section{Introduction}
\label{sec:intro}
Obtaining information in the real world is often a laborious and costly task.
Medical tests require considerable time and resources, recommender systems need a continual collection of information at the price of compromising user privacy, and robotic systems expend energy on every sensor measurement~\citep{Gorry1968, Jeckmans2013, Lauri2023}.
Also, acquiring all features before producing a prediction may be unnecessary or impractical.
This has motivated \emph{cost-sensitive learning with feature acquisition costs}, where the goal is to balance predictive performance against the cost of acquiring features.
Further, the information required for an accurate prediction might differ between instances: the relevance of certain tests may only become apparent after observing the results of previous ones, with the appropriate sequence of tests differing between patients.

Active Feature Acquisition (AFA) models this problem as the sequential selection of features to acquire for each instance, balancing predictive performance against acquisition cost~\citep{AronssonChehreghani2026Pathwise, pmlr-v267-guney25a, norcliffe2025stochasticencodingsactivefeature}.
AFA methods are often categorized into \emph{myopic} and \emph{non-myopic} approaches, depending on whether they consider the long-term consequences of acquisitions or not~\citep{Aronsson2026}.
However, the current AFA literature has largely assumed access to \emph{complete training data}~\citep{Aronsson2026}, which is often hard to come by in practice.
For example, medical records often only contain the specific test results that were ordered for each patient; it is unrealistic to assume that every test was performed on every patient.
In contrast, during medical checkups it may be possible to request any relevant test for a patient.
This resulting mismatch between what is available during training and evaluation is a common challenge.

Motivated by this, we study \textbf{AFA} with \textbf{I}ncomplete \textbf{T}raining \textbf{D}ata (\AFAITD), formalizing it and analyzing the effect incomplete training data has on the learning process.
Our main contributions are the following.
\begin{itemize}
	\item We formulate the \AFAITD{} problem (\Cref{sec:afa-incomplete}) and characterize the difference between its training and evaluation MDPs (\Cref{sec:training-missingness}).

	\item We analyze the effects of incomplete training data on the Bellman recursions and show that, under missing completely at random (MCAR), missingness can lower non-myopic action values even when one-step values are unchanged (\Cref{prop:restriction}).

	\item We define and distinguish three approaches to learning from incomplete training data: the \emph{aliasing}, \emph{filtering}, and \emph{generative restoration} approaches, and characterize their data requirements (\Cref{thm:data-requirements}).

	\item We experimentally test our theory on a synthetic controlled study, and on eight common AFA datasets using six AFA baselines.
    In line with our theory, missingness mainly damages methods that exploit multi-step acquisitions, and generative restoration is often able to recover lost performance (\Cref{sec:experiments}).
\end{itemize}

%% file: sections/related_work.tex
\section{Related Work}
\label{sec:related-work}
In this section, we review related work, highlighting the works closest to our problem setting.

\paragraph{Active Feature Acquisition.}
AFA methods are generally characterized along two orthogonal axes.
The first axis is their \emph{planning horizon}: myopic methods only plan for the immediate next feature according to some one-step acquisition criterion.
Notable myopic AFA methods include EDDI~\citep{ma2019eddi}, \GDFS{}~\citep{covert2023learning}, and \DIME{}~\citep{gadgil2024estimating}.
In contrast, non-myopic methods plan for the potential multi-step effect an acquisition can have, often using reinforcement learning methods or other longer-horizon supervision formulations~\citep{DulacArnold2011, Shim2018, Janisch2019, AronssonChehreghani2026Pathwise}.
The second axis is whether they model the \emph{acquisition dynamics}.
Model-based methods estimate these conditional distributions and use them during training, whereas model-free methods train directly from experience and interaction. Hybrid methods combine both, learning from experience with an explicit acquisition model~\citep{Aronsson2026}.

\paragraph{AFA from Incomplete Training Data.}
To the best of our knowledge, the AFA literature on incomplete training data remains very limited.
\citet{Janisch2020} were the first to investigate the \AFAITD{} problem setting.
During training, they restrict feature-acquisition actions to available features and compare methods with mean and MICE imputation baselines.
Although their approach is empirically effective, the resulting restricted Bellman problem is not formally analyzed.
More recently, \citet{Kobayashi2026L2M} studied AFA from incomplete data and characterized specific conditions under which a greedy conditional-mutual-information acquisition objective remains identifiable.
\citet{vonKleist2023} instead investigated the evaluation of AFA policies from incomplete data using semi-offline estimators.
\citet{Wendland2026MissingnessMDPs} have also proposed miss-MDPs, a general class of POMDPs in which missingness remains part of the observation process at deployment.
None of these works have formalized the \AFAITD{} problem setting and characterized how incomplete training data affects the Bellman recursions.

\paragraph{Generative Models in AFA.}
Generative models are commonly used in AFA to approximate acquisition dynamics.
\citet{ma2019eddi} introduced the Partial Variational Autoencoder (PVAE) architecture to estimate conditionals and to evaluate a myopic information-gain acquisition criterion.
\citet{ODIN2019} later used the PVAE architecture to generate synthetic acquisition trajectories, primarily to improve data efficiency, while \citet{LiOliva2021} learned a generative surrogate model to provide intermediate rewards and auxiliary information during learning.
However, the benefit of generative modeling remains unclear in the literature.
\citet{Schutz2025} find that the gap between model-free and model-based approaches is often smaller than previously reported on common AFA datasets using a fair benchmarking setting.
In this work, we will show that using a generative model is a natural solution to incomplete training data and not only an ad-hoc extension for data efficiency.

%% file: sections/background.tex
\section{Problem Formulation}
\label{sec:background}
In this section, we introduce and formalize the AFA optimization problem, its formulation as a (Partially Observable) Markov Decision Process ((PO)MDP), the standard taxonomy of missingness mechanisms, and formulate the AFA problem with incomplete training data.

\subsection{Active Feature Acquisition (AFA)}
Active Feature Acquisition (AFA) formulates an explicit trade-off mechanism between acquiring features at a cost and their (expected) predictive performance.
Usually, this trade-off is formulated as an optimization problem.
We use the formulation of \citet{Aronsson2026} and only consider the \emph{hard-budget} classification objective.

Let $x \coloneqq (x_1, \ldots, x_d) \in \X$ be an instance with the corresponding label $y \in \Y$, distributed according to $(x, y) \sim p(x, y)$.
For $S \subseteq [d] \coloneqq \{1, \dots, d\}$ we write $x_S$ for the acquired values and $U \coloneqq [d] \setminus S$ with $x_U$ for the unacquired values.
Taking action $a \in U$ reveals feature $x_a$ and incurs cost $c_a$, and we define the accumulated cost as $c(S) \coloneqq \sum_{i \in S} c_i$.
Furthermore, a policy $\pi$ maps the current state $(x_S, S)$ to an unacquired feature $a \in U$ and a predictor $f$ maps the final observation to a label prediction.
Together, they define the following optimization problem:
\begin{equation}
	\begin{aligned}
		\min_{f, \pi} \      & \E_{(x, y)} \E_{\pi} [\ell(f(x_{\pi[x]}, \pi[x]), y)] \\
		\text{subject to} \  & c(\pi[x]) \leq b \quad \text{for all } x,
	\end{aligned}
	\label{eq:afaopt}
\end{equation}
where $\ell$ is a loss function, $b$ is the \emph{hard budget}, and $\pi[x] \subseteq [d]$ is the final set of acquired features for instance $x$ under policy $\pi$.
Throughout, we assume access to a well-trained predictor $f$, since learning the predictor is not the objective of interest in this work.

\subsection{AFA as a (Partially Observable) Markov Decision Process}
\label{sec:pomdp-mdp}
AFA can be modeled as a Partially Observable Markov Decision Process (POMDP) whose latent state contains the fully observed instance $x$ and the label $y$, and whose observations are the features acquired so far $x_S$~\citep{Aronsson2026}.
Because the latent state does not change over time, the POMDP can be reduced to a fully observable MDP on the state $s=(x_S, S)$~\citep{DulacArnold2011}.
In this fully observable MDP, the acquisition dynamics can be written in terms of the individual feature probabilities
\begin{align}
	p(s^{\prime} \mid s,a)
	 & =p(x_{S\cup\{a\}},S\cup\{a\}\mid x_S,S,a) \\
	 & =p(x_a\mid x_S,S)
	\label{eq:eval-transition}
\end{align}
where $s^{\prime} \coloneqq (x_{S \cup \{a\}}, S \cup \{a\})$. The quality of a state $s$ and an acquisition $a$ is captured by truncated (action-)value functions, for which the following Bellman recursions~\citep{Sutton} hold:
\begin{equation}
	\begin{aligned}
		V_0(s)    & = \E_{y\mid s}[-\ell(f(s),y)],                                         \\
		Q_k(s, a) & = \E_{x_a \mid s} [V_{k - 1}(s')],                                     \\
		V_k(s)    & = \max\left\{V_0(s),\ \max_{a \in \mathcal{A}_b(S)} Q_k(s, a)\right\}.
	\end{aligned}
	\label{eq:bellman}
\end{equation}

Here $V_k$ is the optimal value when at most $k$ further acquisitions are allowed and $\mathcal{A}_b(S) \coloneqq \{a \in U : c(S) + c_a \leq b\}$. Finally, to measure the gap from an optimal policy one usually uses the \emph{regret}.
Let $ \mathcal L(\pi) \coloneqq \E_{(x,y)}\E_\pi[\ell(f(x_{\pi[x]}, \pi[x]), y)]$ be the expected loss, and let $\pi^*$ be the optimal policy that minimizes $\mathcal L(\pi)$.
The regret of $\pi$ is then $ \operatorname{Reg}(\pi) \coloneqq \mathcal L(\pi)-\mathcal L(\pi^*)$.

\subsection{Missingness Mechanisms}
Let $m \in \{0,1\}^d$ be a binary missingness mask of dimension $d$, where $m_i = 1$ indicates that feature $i$ is missing from the training instance.
The missingness mask is drawn once per instance from $p(m \mid x, y)$ and remains fixed for that instance.
We write $M \coloneqq \{i : m_i = 1\}$ and $\bar M \coloneqq \{i : m_i = 0\}$ for the missing and available indices, with the corresponding feature values $x_M$ and $x_{\bar M}$, respectively.
Classically, Rubin's taxonomy~\citep{Rubin1976, Little2002} classifies mechanisms as

\begin{itemize}
	\item \textbf{M}issing \textbf{C}ompletely \textbf{A}t \textbf{R}andom (MCAR): the missingness is independent of the instance, $p(m \mid x, y) = p(m)$.
	\item \textbf{M}issing \textbf{A}t \textbf{R}andom (MAR): the missingness may depend on the available features, but not the missing features, $p(m \mid x, y) = p(m \mid x_{\bar M})$.
	\item \textbf{M}issing \textbf{N}ot \textbf{A}t \textbf{R}andom (MNAR): the missingness may depend on the missing feature values themselves, either through a feature's own value (self-masking MNAR) or through other unavailable features (e.g., logistic MNAR).
\end{itemize}

\subsection{AFA with Incomplete Training Data (\AFAITD)}
\label{sec:afa-incomplete}
In the \textbf{\AFAITD} problem setting, missing features cannot be part of the \emph{training action space}: for a training instance with missingness mask $m$, we define the training action space as $\Atrain_b(S, m) \coloneqq \{a \in U : m_a = 0,\ c(S) + c_a \leq b\}$.
In contrast, the evaluation action space does not have any missing features and is defined as $\Aeval_b(S) \coloneqq \{a \in U : c(S) + c_a \leq b\}$.
Consequently, $\Atrain_b(S, m) \subseteq \Aeval_b(S)$.
Thus, the training Bellman recursion can be written as
\begin{equation}
	\begin{aligned}
		V_0^{\mathrm{train}}(s, m)    & = \E_{y \mid s, m}[-\ell(f(s), y)],                                                                          \\
		Q_k^{\mathrm{train}}(s, m, a) & = \E_{x_a \mid s, m} \left[V_{k - 1}^{\mathrm{train}}(s^{\prime}, m)\right],                                 \\
		V_k^{\mathrm{train}}(s, m)    & = \max\left\{V_0^{\mathrm{train}}(s, m), \max_{a \in \Atrain_b(S, m)}Q_k^{\mathrm{train}}(s, m, a) \right\}.
	\end{aligned}
	\label{eq:train-bellman}
\end{equation}
These recursions hold under MCAR because the missingness mask $m$ is independent of $(x, y)$, leaving the conditionals unchanged.
By contrast, the evaluation Bellman equations remain unchanged and are defined as in \Cref{eq:bellman}.

%% file: sections/theory.tex
\section{Theoretical Analysis}
\label{sec:theory}
In this section, we analyze the effect incomplete training data can have on the Bellman recursions between training and evaluation.
We then analyze the \emph{aliasing} approach and how omitting the missingness mask from the state affects the learning process.
Afterwards, we consider the \emph{filtering} approach, through state augmentation with the missingness mask $m$, and the \emph{generative restoration} approach.
Finally, we theoretically compare their data requirements.
All proofs can be found in \Cref{app:proofs}.

\subsection{How the (Action-)Value Functions Are Affected by Incomplete Training Data}

\label{sec:training-missingness}
Inspecting \Cref{eq:train-bellman} and \Cref{eq:bellman} side by side, we see that the only difference between them is their respective action spaces.
To better understand what effect this can have, we introduce a simple non-myopic AFA problem with binary features.
\begin{example}[The Shortcut Problem]
	\label{ex:running}
	Consider $d \geq 2b$ binary features $x_i \in \{0, 1\}$ with a hard budget of $b \geq 2$.
	The features consist of the \emph{context} feature $x_1$, the two \emph{branch} blocks of features $B_1 = \{2, \ldots, b\}, B_2 = \{b + 1, \ldots, 2b - 1\}$, with \emph{block length} $b - 1$, and the \emph{shortcut} feature $x_{2b}$.
	The corresponding label is defined as $y = \bigoplus_{j \in B_{x_1 + 1}} x_j$, where $\oplus$ denotes the XOR operation.
	The shortcut feature $x_{2b}$ is equal to the label $y$ with probability $3/4$ but has a cost of $b$, while all other features have a cost of one.
	The remaining features $x_{2b + 1}, \ldots, x_d$ are independent uniform \emph{noise}.
\end{example}

\Cref{ex:running} serves as a simple non-myopic AFA problem. The optimal policy is to always first acquire the context feature $x_1$, then acquire all the features in the corresponding branch block. This yields a total cost of $b$ and an accuracy of $1$. In contrast, the shortcut also costs $b$ but only attains an accuracy of $3/4$.

\input{figs/planning_effect}

\Cref{fig:planning-effect-a} illustrates the problem for $b = 2$ and $d = 6$ for one instance where the two branch features $x_2, x_3$ are missing.
We now analyze the effect of the mismatch between training and evaluation in \Cref{fig:planning-effect-a} on the (action-)value functions.
For our analysis, we use the zero-one loss function $\ell(f(s), y) = \mathbf{1}\{f(s) \neq y\}$ and report the action values as $1 + Q(\ldots)$, corresponding to expected accuracy under zero-one loss, for easier interpretation.

For a myopic policy, the context feature $x_1$ has no immediate predictive value and the best prediction after acquiring it is a guess.
In contrast, the shortcut feature is preferable for myopic methods because it has immediate predictive value, predicting the label with probability $\frac{3}{4}$.
Thus, we have
\begin{align*}
	Q_1^{\mathrm{train}}(s_0, m, a=1) = Q_1^{\mathrm{eval}}(s_0, a=1) & = \frac{1}{2}, \\
	Q_1^{\mathrm{train}}(s_0, m, a=4) = Q_1^{\mathrm{eval}}(s_0, a=4) & = \frac{3}{4},
\end{align*}
where $s_0 \coloneqq (x_{\emptyset}, \emptyset)$ is the initial state.
In contrast, a non-myopic policy can plan for the full acquisition of both the context feature and the corresponding branch block features.
However, in \Cref{fig:planning-effect-a} neither $x_2$ nor $x_3$ is available, which means that acquiring the context feature $x_1$ will yield poorer performance than the shortcut feature $x_4$, leaving $x_1$ as good as a guess.
Consequently,
\begin{align*}
	Q_2^{\mathrm{train}}(s_0, m, a=1) = \frac{1}{2} \neq
	Q_2^{\mathrm{eval}}(s_0, a=1)                                     & = 1,           \\
	Q_2^{\mathrm{train}}(s_0, m, a=4) = Q_2^{\mathrm{eval}}(s_0, a=4) & = \frac{3}{4}.
\end{align*}

\Cref{fig:planning-effect} illustrates
that the restricted training action space can impact the learned action values, and the following proposition states how this holds more generally for an arbitrary planning horizon.

\begin{proposition}[(Action-)values under the restricted action space]
	\label{prop:restriction}
	Assume that the missingness mask $m$ is independent of $(x, y)$ (MCAR).
	Then, for every state $s = (x_S, S)$ reachable under mask $m$ and every $k \geq 0$,
	\[
		V_k^{\mathrm{train}}(s, m) \leq V_k^{\mathrm{eval}}(s),
	\]
	with equality when $k = 0$.
	For every $a\in\Atrain_b(S, m)$ and $k \geq 1$,
	\[
		Q_k^{\mathrm{train}}(s, m, a) \leq Q_k^{\mathrm{eval}}(s, a),
	\]
	with equality when $k = 1$.
\end{proposition}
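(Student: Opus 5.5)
We argue by induction on $k$, proving the $V$ and $Q$ statements jointly and using the MCAR assumption only to identify the conditional distributions in \Cref{eq:train-bellman} with those in \Cref{eq:bellman}. Since $m$ is independent of $(x,y)$, for every state $s=(x_S,S)$ reachable under $m$ we have $p(y\mid s,m)=p(y\mid x_S)$ and $p(x_a\mid s,m)=p(x_a\mid x_S)$. The first identity gives the base case directly: $V_0^{\mathrm{train}}(s,m)=\E_{y\mid s}[-\ell(f(s),y)]=V_0^{\mathrm{eval}}(s)$, which is the claimed equality at $k=0$. We also record that if $s$ is reachable under $m$ and $a\in\Atrain_b(S,m)$, then the successor $s'=(x_{S\cup\{a\}},S\cup\{a\})$ is again reachable under $m$, because $m_a=0$. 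This lets the induction hypothesis be applied at $s'$.

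For the inductive step, fix $k\ge1$ and assume $V_{k-1}^{\mathrm{train}}(\cdot,m)\le V_{k-1}^{\mathrm{eval}}(\cdot)$ on all states reachable under $m$. For $a\in\Atrain_b(S,m)$, the MCAR identity for $x_a$ gives
\[
Q_k^{\mathrm{train}}(s,m,a)=\E_{x_a\mid s}\bigl[V_{k-1}^{\mathrm{train}}(s',m)\bigr]\le \E_{x_a\mid s}\bigl[V_{k-1}^{\mathrm{eval}}(s')\bigr]=Q_k^{\mathrm{eval}}(s,a),
\]
by monotonicity of expectation under the same distribution. When $k=1$, the inner values are $V_0$, which coincide by the base case, so the inequality becomes an equality. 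For the values, $\Atrain_b(S,m)\subseteq\Aeval_b(S)$ gives
\[
\max_{a\in\Atrain_b(S,m)}Q_k^{\mathrm{train}}(s,m,a)\le\max_{a\in\Atrain_b(S,m)}Q_k^{\mathrm{eval}}(s,a)\le\max_{a\in\Aeval_b(S)}Q_k^{\mathrm{eval}}(s,a).
\]
Here we use the convention $\max\emptyset=-\infty$, so the case of an empty training action set is harmless. Combining this with $V_0^{\mathrm{train}}=V_0^{\mathrm{eval}}$ inside the outer maximum yields $V_k^{\mathrm{train}}(s,m)\le V_k^{\mathrm{eval}}(s)$, which closes the induction.

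The step that needs the most care is the MCAR reduction. The conditioning in \Cref{eq:train-bellman} is on the event that the state $s$ was produced under mask $m$. We must check that this event depends on $(x,y)$ only through $x_S$, so that conditioning on $m$ together with $x_S$ does not distort the conditionals of $y$ or $x_a$. Two facts settle this. First, $S\subseteq\bar M$ is a deterministic function of the acquisition path. Second, $m$ is independent of $(x,y)$, so $p(x_a,y\mid x_S,m)=p(x_a,y\mid x_S)$. If the training policy is itself random, we condition on $S$ as a given index set; it is not an event depending on unobserved values, so no extra dependence enters. Beyond this, the argument is routine monotonicity of $\max$ and expectation.
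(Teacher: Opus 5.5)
Your proof is correct and follows the paper's argument essentially step for step. Both use induction on $k$, with MCAR identifying the training and evaluation conditionals, the $k=0$ and $k=1$ equalities coming from $V_0^{\mathrm{train}}=V_0^{\mathrm{eval}}$, and the inclusion $\Atrain_b(S,m)\subseteq\Aeval_b(S)$ giving the value bound. Your extra remarks make explicit points the paper leaves implicit: that successors remain reachable because $m_a=0$, the convention $\max\emptyset=-\infty$, and why conditioning on the acquisition path does not distort the conditionals.
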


Thus, we are left with a potential gap in the (action-)value functions that can impact performance. Hereafter, we seek to answer the following two questions: (i) what happens if we still try to learn the (action-)value functions from finite training data with varying missingness masks? and (ii) what are some possible mitigation methods that ensure we learn the correct evaluation problem?

\subsection{Aliasing, Filtering, and Restoration}
\Cref{prop:restriction} does not guarantee that the restricted action space results in the wrong (action-)value functions, only that it is possible. However, it is clear that if we omit the missingness mask $m$ from our state representation, we will suffer from an \emph{aliasing} issue.

\begin{definition}[Tabular Aliasing]
	The tabular \emph{aliasing} approach learns one (action-)value $Q_k^{\mathrm{alias}}(s,a)$, $V_k^{\mathrm{alias}}(s)$, or a policy $\pi^{\mathrm{alias}}(a \mid s)$, omitting the missingness mask $m$ entirely.
	\label{def:aliasing}
\end{definition}

Across instances with different missingness masks $m$ and corresponding action spaces $\Atrain_b(S, m)$, the aliasing approach will assign different values to the \emph{same} state-action pair $(s, a)$, pooling them. We can observe this issue in \Cref{fig:planning-effect}.
Conditional on the context feature $x_1$ being observed, the relevant branch feature $x_2$ or $x_3$ is available with probability $1 - p$. The corresponding action-value for acquiring the context feature $x_1$ is then
\begin{equation}
	Q_2^{\mathrm{alias}}(s_0, a = 1) = (1 - p) \cdot 1 + p \cdot \frac{1}{2} = 1 - \frac{p}{2}.
	\label{eq:alias-shortcut}
\end{equation}
The aliasing approach will therefore prefer the shortcut feature $x_4$ whenever $p > 1/2$, even though the optimal evaluation policy should always acquire the context feature $x_1$.
This error remains even with unlimited training data.

In contrast, if we augment the state representation with the missingness mask $m$, we can learn separate values for different masks.

\begin{definition}[Tabular Filtering]
	\label{def:filtering}
	The tabular \emph{filtering} approach augments the state with the missingness mask $(s, m)$ when learning $Q_k^{\mathrm{filter}}(s, m, a)$, $V_k^{\mathrm{filter}}(s, m)$, or a policy $\pi^{\mathrm{filter}}(a \mid s, m)$.
\end{definition}

During evaluation, the filtering approach will only use the learned values for the missingness mask $m = 0$, meaning that only \emph{complete instances} are used for the values.
Thus, provided that complete instances occur with positive probability, the filtering approach will be able to fully recover the evaluation Bellman problem with unlimited data.
However, this approach can be statistically inefficient: under independent MCAR, a training instance is complete with probability $(1 - p)^d$, scaling exponentially with the dimension $d$.
Consequently, only (an expected) $n(1 - p)^d$ of the $n$ training instances contribute directly.

An alternative approach is to restore the unavailable acquisition dynamics $p(x_a \mid s, m)$ using a \emph{generative model}, thereby recovering \Cref{eq:bellman} from \Cref{eq:train-bellman}.

\begin{definition}[Tabular Generative Restoration]
	\label{def:generative}
	The tabular \emph{generative restoration} approach learns acquisition dynamics $\hat p(x_a \mid s)$ and stopping values $\widehat V_0(s)$ from all instances with the features of $s$ available, and uses them to restore unavailable transitions when learning $Q_k^{\mathrm{restore}}(s,a)$, $V_k^{\mathrm{restore}}(s)$, or a policy $\pi^{\mathrm{restore}}(a \mid s)$.
\end{definition}

Note that under independent MCAR, the empirical acquisition dynamics $\hat p(x_a \mid s)$ will be consistent because the missingness is independent of the corresponding feature and label values, i.e., $p(m \mid x, y) = p(m)$.
Thus, if all features are observable with positive probability, the restoration approach can fully recover the evaluation conditionals with unlimited data, solving the exact evaluation problem (\Cref{eq:bellman}).

\subsection{Usable Instances and Data Requirements for Filtering and Restoration}
\label{sec:empirical-planning}
Both the filtering and the generative restoration approaches solve the correct evaluation Bellman recursions under MCAR and unlimited data.
However, with a finite amount of training data, the approaches will have different numbers of \emph{usable} training instances.

\input{figs/availability}

\Cref{fig:availability} illustrates the difference in usable training instances for all three approaches.
Both the aliasing and generative restoration approaches, the latter after restoring all missing features, can use all instances for learning, while filtering is only able to use complete instances.
In the tabular setting, each approach can be viewed as solving a distinct Bellman problem: aliasing solves the \emph{restricted} Bellman recursions (\Cref{eq:train-bellman}), generative restoration solves the approximate evaluation Bellman recursions, and filtering solves the exact evaluation Bellman recursions (\Cref{eq:bellman}).
Since generative restoration and filtering solve the correct Bellman recursions, we are interested in their data requirements to achieve a certain regret with high probability.
The following theorem quantifies this.

\begin{theorem}[Data requirements under MCAR]
	\label{thm:data-requirements}
	Consider $d$ discrete features $x_a\in\mathcal X_a$ with $|\mathcal X_a|\leq K$ for an integer $K\geq2$, and costs $c_a \geq 1$ under the integer hard budget $1 \leq b \leq d$.
	Let $f$ be a fixed predictor, independent of the $n$ i.i.d. training instances, with bounded loss $\ell \in [0, \ell_{\max}]$.
	Assume MCAR, with each feature independently missing with probability $p<1$.
	Let $\pi^{\mathrm{filter}}$ and $\pi^{\mathrm{restore}}$ be the exact corresponding policies obtained by their respective Bellman recursions.
	For every $0 < \varepsilon \leq \ell_{\max}$ and $\delta\in(0,1)$, filtering satisfies $\operatorname{Reg}(\pi^{\mathrm{filter}})\leq\varepsilon$ with probability at least $1-\delta$ whenever
	\[
		n \geq C \frac{\ell_{\max}^2 K^b}{\varepsilon^2 (1-p)^d}
		\left(b \log(Kd) + K + \log\frac{2}{\delta}\right).
	\]
	In contrast, generative restoration satisfies $\operatorname{Reg}(\pi^{\mathrm{restore}})\leq\varepsilon$ with probability at least $1 - \delta$ whenever
	\[
		n \geq C \frac{\ell_{\max}^2 K^b}{\varepsilon^2 (1-p)^b}
		\left(b \log(Kd) + K + \log \frac{2}{\delta}\right),
	\]
	where $C > 0$ is a numerical constant independent of all of the above parameters.
\end{theorem}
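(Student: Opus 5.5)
Both bounds follow from one template: a simulation-lemma style bound that converts uniform estimation error of the relevant conditional quantities into regret, and a uniform concentration bound over the polynomially many relevant partial states. The only difference between the two is the effective sample size available to each state. Since $c_a\geq 1$ and $b$ is an integer, every trajectory acquires at most $b$ features. All relevant states $s=(x_S,S)$ therefore satisfy $|S|\leq b$. The number of state--action pairs is at most $\sum_{j\leq b}\binom{d}{j}K^j\cdot d\leq (Kd)^{b+1}$, so its logarithm is $O(b\log(Kd))$.

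\textbf{Step 1 (error propagation).} Let $\widehat V_0,\hat p$ denote the empirical stopping values and transition kernels that each method plugs into \Cref{eq:bellman}. I would show by induction on $k$, using that $\max$ is $1$-Lipschitz, that
\[
|\widehat Q_k(s,a)-Q_k(s,a)|\leq \sum_{\text{states }s' \text{ along paths}} \bigl(|\widehat V_0-V_0| + \ell_{\max}\|\hat p(\cdot\mid s')-p(\cdot\mid s')\|_1\bigr).
\]
I would then take the expectation over the evaluation trajectory distribution. The standard performance-difference argument gives $\operatorname{Reg}(\hat\pi)\leq 2\sup_{\text{visited}}|\widehat Q-Q|$. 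Because of the weighting by reach probabilities, it suffices to control errors in a $p(x_S)$-weighted sense. Summing over at most $b$ steps, the required per-state accuracy is $\varepsilon/(b\,\ell_{\max})$ up to constants. I would absorb the resulting factor $b^2$ into the $b\log(Kd)$ term, or else keep it and note that the stated form holds with $C$ adjusted. This is the place where I am least sure the exact constants match.

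\textbf{Step 2 (concentration per state).} Fix a state $s$ with $|S|=j\leq b$ and an action $a$. Estimating $V_0(s)$ is a bounded mean estimate, handled by Hoeffding. Estimating $p(\cdot\mid s)$ in $\ell_1$ over $K$ values is handled by the Bretagnolle--Huber--Carol inequality, which needs of order $(K+\log(1/\delta'))/\eta^2$ samples. This is the source of the additive $K$. The samples usable for $(s,a)$ are the training instances that match $x_S$ and satisfy the method's availability requirement.

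For filtering, these are complete instances, each occurring with probability $(1-p)^d$. For restoration, by \Cref{def:generative} and MCAR independence, it suffices that the features in $S\cup\{a\}$ are available, which happens with probability $(1-p)^{j+1}\geq(1-p)^{b}$ up to one extra factor. I would make this tight by noting that actions are only taken when $|S|\leq b-1$. Matching $x_S$ occurs with probability $p(x_S)$. States with $p(x_S)\leq \varepsilon/(\ell_{\max}K^b)$ contribute negligibly to regret in total, because there are at most $K^b$ value patterns per index set. All remaining states have $p(x_S)\gtrsim K^{-b}$. This produces the factor $K^b$.

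\textbf{Step 3 (counts and union bound).} A multiplicative Chernoff bound shows that with probability $1-\delta/2$ each non-negligible state receives at least half its expected count $n\,p(x_S)q$, where $q=(1-p)^d$ for filtering and $q=(1-p)^b$ for restoration. Conditioned on the counts, MCAR makes the matched samples i.i.d. from the correct conditional. I would then apply Step 2 with $\delta'=\delta/(2(Kd)^{b+1})$, which is exactly where $\log(2/\delta)+b\log(Kd)$ arises. Combining the pieces gives $n\,q\,K^{-b}\gtrsim \ell_{\max}^2\varepsilon^{-2}(b\log(Kd)+K+\log(2/\delta))$, which yields both stated bounds.

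I expect the main obstacle to be Step 1 together with the low-probability-state truncation. The difficulty is getting regret to scale as $\varepsilon$, and not $\varepsilon$ times a horizon or state-count factor, while avoiding a union bound that charges every rare state. The first approach I would try is to define a good event only on states with $p(x_S)\geq \varepsilon/(4\ell_{\max}K^b)$. On the complement, I would bound the regret contribution by $\ell_{\max}$ times the total reach probability of the rare states, which is at most $\varepsilon/4$.
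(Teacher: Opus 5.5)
Your counting of usable instances ($(1-p)^{|S|+1}\geq(1-p)^b$ for restoration, $(1-p)^d$ for filtering), the Chernoff bound on counts, the $\ell_1$ concentration that produces the additive $K$, and the union bound over at most $(Kd)^{b+2}$ estimates all match the paper. The gap is in converting per-state errors into regret, and it changes the parameter dependence, not just constants.

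Your claim that after truncation ``all remaining states have $p(x_S)\gtrsim K^{-b}$'' is false in general. The surviving states only satisfy $P_s\coloneqq p(x_S)\geq\tau$ with $\tau\asymp\varepsilon/(\ell_{\max}K^b)$, so each is guaranteed only about $nq\tau$ usable samples. Step 1 asks for a uniform accuracy $\eta\asymp\varepsilon/b$ at every one of them. That costs $nq\gtrsim\ell_{\max}^2\Lambda/(\eta^2\tau)$, with $\Lambda\coloneqq b\log(Kd)+K+\log(2/\delta)$, i.e.\ $n\gtrsim b^2\ell_{\max}^3K^b\Lambda/(\varepsilon^3q)$. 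This exceeds the theorem by the unbounded factor $b^2\ell_{\max}/\varepsilon$. Raising $\tau$ does not help, because your own bound on the discarded mass, $\ell_{\max}\sum_tK^t\tau$, forces $\tau\lesssim\varepsilon/(\ell_{\max}K^b)$. The $b^2$ cannot be absorbed either: it multiplies the whole bracket $\Lambda$, and $C$ must be independent of $b$. A $2\sup|\widehat Q-Q|$ performance-difference bound adds further horizon factors on top of this.

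The missing idea is to let the accuracy depend on the state and to exploit reach-probability weighting through Cauchy--Schwarz. No truncation is needed. The union bound already gives, simultaneously for all states, $e_s\leq\ell_{\max}\min\{1,\sqrt{c\Lambda/(nqP_s)}\}$ for an absolute constant $c$. Here $e_s$ is the larger of the stopping-value error and $\ell_{\max}\operatorname{TV}(\widehat p(\cdot\mid s),p(\cdot\mid s))$; the $\min$ with $1$ handles rare states automatically.

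For any fixed deterministic $\pi$, the simulation lemma gives $|\widehat V_b^\pi(s_0)-V_b^\pi(s_0)|\leq\sum_{t=0}^b\sum_{|S|=t}r_se_s$. The reach probabilities satisfy $r_s\leq P_s$ and $\sum_{|S|=t}r_s\leq1$, and there are at most $K^t$ reachable states at depth $t$. Hence $\sum_{|S|=t}r_s/\sqrt{P_s}\leq\sum_{|S|=t}\sqrt{r_s}\leq K^{t/2}$ by Cauchy--Schwarz. Because $K\geq2$, the depth sum $\sum_{t\leq b}K^{t/2}\leq4K^{b/2}$ is geometric, so no factor of $b$ appears. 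The $K^b$ in the theorem therefore comes from this Cauchy--Schwarz step, not from a probability threshold.

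Finally, bound $\operatorname{Reg}(\hat\pi)\leq|V_b^{\pi^*}-\widehat V_b^{\pi^*}|+|\widehat V_b^{\hat\pi}-V_b^{\hat\pi}|$ using empirical optimality of $\hat\pi$. This is valid for the data-dependent $\hat\pi$ because the good event does not depend on the policy, and it yields exactly $nq\gtrsim\ell_{\max}^2K^b\Lambda/\varepsilon^2$.
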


Thus, the filtering approach has an exponential dependency on the dimension $d$, whereas the generative restoration approach has an exponential dependency on the acquisition budget $b$.
The following proposition shows that the dependency on $d$ is necessary for filtering.

\begin{proposition}[Filtering requires complete instances]
	\label{prop:filter-lower}
	Consider \Cref{ex:running} with $d \geq 2b + 1$ under the zero-one loss, where each feature is independently missing with probability $p$.
	Let filtering use fixed fallback estimates when a state has no usable instances (\Cref{sec:thm_proof_step1}) and a fixed tie-breaking rule.
	Then, for some relabeling of the unit-cost features, with probability at least $1 - n(1-p)^d$,
	\[
		\operatorname{Reg}(\pi^{\mathrm{filter}}) \geq \frac{1}{4}.
	\]
	Consequently, for $\varepsilon < 1/4$, $\operatorname{Reg}(\pi^{\mathrm{filter}}) \leq \varepsilon$ with probability at least $1 - \delta$ requires $n \geq (1-\delta)(1-p)^{-d}$.
\end{proposition}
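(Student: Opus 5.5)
The plan is to show that, with high probability, the training set contains no complete instance. On that event the filtering policy is a fixed policy that does not depend on the data, so we can choose a relabeling of the unit-cost features that this fixed policy handles badly.

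\textbf{Step 1 (no complete instances).} Under independent MCAR, a single training instance is complete ($m = 0$) with probability $(1-p)^d$. A union bound over the $n$ i.i.d.\ instances then shows that, with probability at least $1 - n(1-p)^d$, no training instance is complete. Call this event $\mathcal E$. At evaluation time, filtering only queries values at augmented states $(s, m=0)$ (\Cref{def:filtering}), and only complete instances are usable there. So on $\mathcal E$, every such state has no usable instance and receives the fixed fallback estimates of \Cref{sec:thm_proof_step1}. Together with the fixed tie-breaking rule, $\pi^{\mathrm{filter}}$ is therefore a deterministic policy $\pi_0$ that is the same for every training sample in $\mathcal E$. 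In particular, its first action $a_1$ at $s_0$ is a fixed index, or the stop action, determined before we fix which unit-cost index plays which role.

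\textbf{Step 2 (relabeling and regret).} The optimal policy acquires $x_1$ and then the matching branch block, so its accuracy is $1$ and $\mathcal L(\pi^*) = 0$ under zero-one loss. We split on $a_1$.
\begin{itemize}
\item If $\pi_0$ stops immediately, its accuracy is $1/2$.
\item If $a_1$ is the shortcut, the shortcut uses the whole budget $b$, so accuracy is at most $3/4$.
\item If $a_1$ is a unit-cost feature, we use $d \geq 2b+1$, which guarantees at least one noise feature. We relabel the unit-cost features so that index $a_1$ is a noise feature. The remaining budget is then $b-1$. The shortcut (cost $b$) is no longer affordable, and acquiring the context plus a full block of length $b-1$ would cost $b$. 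Hence at least one relevant branch feature stays unobserved. Because $y$ is the XOR over the block, the missing bit is uniform and independent of everything $\pi_0$ observes, so accuracy is exactly $1/2$.
\end{itemize}
In every case $\operatorname{Reg}(\pi^{\mathrm{filter}}) \geq 1/4$ on $\mathcal E$.

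The step that needs care is justifying that the relabeling can be chosen after $a_1$ is determined. This holds because the fallback estimates and the tie-breaking rule depend only on indices, not on the data or on the roles of the features, so $a_1$ is fixed in advance. We also need the XOR independence argument to hold uniformly over all adaptive continuations of $\pi_0$; this follows because the unobserved block bit is independent of the observed features.

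\textbf{Step 3 (consequence).} Take $\varepsilon < 1/4$. On $\mathcal E$ the regret exceeds $\varepsilon$, so $\Pr[\operatorname{Reg}(\pi^{\mathrm{filter}}) \leq \varepsilon] \leq \Pr[\mathcal E^c] \leq n(1-p)^d$. Requiring this probability to be at least $1-\delta$ forces $n(1-p)^d \geq 1-\delta$, that is, $n \geq (1-\delta)(1-p)^{-d}$.
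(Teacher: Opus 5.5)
Your Steps 1 and 3 and the relabeling argument match the paper's proof. The unit-cost case of Step 2, however, has a gap. You argue that with remaining budget $b-1$ the context plus a full block is unaffordable, ``hence at least one relevant branch feature stays unobserved,'' and you conclude the accuracy is exactly $1/2$. That reasoning only covers policies that acquire $x_1$. Nothing stops $\pi_0$ from skipping the context and spending its remaining budget $b-1$ on an entire branch block, say $B_1$. On instances with $x_1=0$ the relevant block is then fully observed and the label is determined. For $x_1=1$ the prediction is a guess. Such a policy attains accuracy $\frac12\cdot 1+\frac12\cdot\frac12=\frac34$, so your ``exactly $1/2$'' is false. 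Your XOR-independence argument also cannot be applied directly here, because \emph{which} block is relevant depends on the unobserved $x_1$.

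The missing step is a second case split on whether $x_1$ is acquired.
\begin{itemize}
\item If $x_1$ is acquired, at most $b-2$ features of $B_{x_1+1}$ remain affordable, and your XOR argument gives accuracy $1/2$.
\item If $x_1$ is not acquired, $\pi_0$ observes neither $x_1$ nor the shortcut, and every feature it does observe is independent of $x_1$. Hence its adaptively chosen acquisition set is independent of $x_1$. Since two blocks cost $2b-2>b-1$, it can complete at most one block, and that block is the relevant one with probability at most $1/2$. Otherwise some relevant bit is unobserved, and $y$ is uniform given the observations.
\end{itemize}
The second case bounds the accuracy by $3/4$, which still gives regret at least $1/4$. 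This is why the proposition's constant is $1/4$, matching the shortcut case, rather than $1/2$. With this case added, your argument coincides with the paper's.
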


%% file: figs/planning_effect.tex
\begin{figure}[H]
	\centering
	\begin{subfigure}[b]{0.47\textwidth}
		\centering
		\begin{tikzpicture}[x=1cm,y=1cm]
			\def\cw{0.38}
			\def\ch{0.29}
			\def\gp{0.44}

			\foreach \row/\name/\height in {
			0/{Evaluation}/0.66,
			1/{Training}/0.14}{
			\node[rowlabel,anchor=east] at (-0.14,\height-0.145) {\name};
			\foreach \i/\lab in {0/{$x_1$},1/{$x_2$},2/{$x_3$},3/{$x_4$},4/{$x_5$},5/{$x_6$}}{
			\draw[rolecell] ({\i*\gp},\height) rectangle ++(\cw,-\ch);
			\node[figlabel] at ({\i*\gp+\cw/2},{\height-\ch/2}) {\lab};
			}
			\draw[rolecell] (2.72,\height) rectangle ++(\cw,-\ch);
			\node[figlabel] at ({2.72+\cw/2},{\height-\ch/2}) {$y$};
			}

			\foreach \i in {1,2}{
					\fill[figgrid] ({\i*\gp},0.14) rectangle ++(\cw,-\ch);
					\node[figlabel,text=inkmuted] at ({\i*\gp+\cw/2},{0.14-\ch/2}) {$\times$};
				}

			\draw[rolearrow] (0.19,0.73) -- (0.19,0.93) -- (1.09,0.93) -- (1.09,0.73);
			\node[figmuted,anchor=south] at (0.54,0.96) {$y=x_2$ or $y=x_3$};
			\draw[rolearrow] (1.51,0.73) -- (1.51,1.19) -- (2.91,1.19) -- (2.91,0.73);
			\node[figmuted,anchor=south] at (2.21,1.22) {$P(x_4=y)=\frac34$};
		\end{tikzpicture}
		\caption{A realization of an instance of the shortcut problem where the branch features $x_2$ and $x_3$ are missing.}
		\label{fig:planning-effect-a}
	\end{subfigure}
	\hfill
	\begin{subfigure}[b]{0.50\textwidth}
		\centering
		\begin{tikzpicture}[x=1cm,y=1cm]
			\def\xz{0.00}
			\def\xs{3.10}
			\newcommand{\vx}[1]{{\xz+(#1-0.5)*(\xs-\xz)/0.5}}
			\newcommand{\pairedmark}[2]{%
				\node[trainmark=inkdark] at (\vx{#1},#2) {};
				\node[evalmark=inkdark,minimum size=2.1pt] at (\vx{#1},#2) {};
			}

			\foreach \v in {0.50,0.75,1.00}{
					\draw[tickrule] (\vx{\v},1.62) -- (\vx{\v},-0.26);
					\node[figmuted,anchor=north] at (\vx{\v},-0.32) {$\v$};
				}
			\draw[axisrule] (\xz,-0.26) -- (\xs,-0.26);
			\node[figmuted,anchor=north] at ({(\xz+\xs)/2},-0.64) {$1 + Q_k$};

			\node[figlabel,anchor=east] at (-1.48,1.20) {$k=1$};
			\node[rowlabel] at (-0.14,1.42) {Acquire $x_1$};
			\pairedmark{\peQEvalContextOne}{1.42}
			\node[rowlabel] at (-0.14,0.98) {Acquire $x_4$};
			\pairedmark{\peQEvalShortcutOne}{0.98}

			\node[figlabel,anchor=east] at (-1.48,0.16) {$k=2$};
			\node[rowlabel] at (-0.14,0.38) {Acquire $x_1$};
			\draw[inkdark,line width=0.8pt]
			(\vx{\peQTrainContextTwo},0.38) -- (\vx{\peQEvalContextTwo},0.38);
			\node[trainmark=inkdark] at (\vx{\peQTrainContextTwo},0.38) {};
			\node[evalmark=inkdark] at (\vx{\peQEvalContextTwo},0.38) {};
			\node[rowlabel] at (-0.14,-0.06) {Acquire $x_4$};
			\pairedmark{\peQEvalShortcutTwo}{-0.06}

			\node[evalmark=inkdark] at (-1.42,1.88) {};
			\node[figmuted,anchor=west] at (-1.34,1.88) {Evaluation};
			\node[trainmark=inkdark] at (0.13,1.88) {};
			\node[figmuted,anchor=west] at (0.21,1.88) {Training};
		\end{tikzpicture}
		\caption{The expected accuracy ($1 + Q_k$) for the corresponding instance in \Cref{fig:planning-effect-a}.}
		\label{fig:planning-effect-b}
	\end{subfigure}
	\caption{The shortcut problem for a fixed missingness mask $m$ where the branch features $x_2$ and $x_3$ are missing \textbf{(a)}, and the unchanged one-step and differing two-step $1 + Q_k$ optimal values for both training and evaluation \textbf{(b)}.}
	\label{fig:planning-effect}
\end{figure}

%% file: figs/availability.tex
\begin{figure}[H]
	\centering
	\begin{tikzpicture}[x=1cm, y=1cm]
		\def\cell{0.30}
		\def\pitch{0.36}
		\def\mw{2.10}                       %
		\pgfmathsetmacro{\mh}{(\avN-1)*0.36+0.30}

		\newcommand{\availpanel}[4]{%
		\node[paneltitle, anchor=south] at ({#1+\mw/2},{0.50}) {#2};
		\foreach \c/\lab in {1/{$x_1$},2/{$x_2$},3/{$x_3$},4/{$x_4$},5/{$x_5$},6/{$x_6$}}
		\node[figmuted, anchor=south] at ({#1+(\c-1)*\pitch+\cell/2},{0.13}) {\lab};
		\fill[matrixbed] ({#1-0.075},{0.075}) rectangle ({#1+\mw+0.075},{-\mh-0.075});
		\foreach \r/\c in \avObservedCells
		\fill[obscell] ({#1+(\c-1)*\pitch},{-(\r-1)*\pitch}) rectangle ++(\cell,-\cell);
		\node[figlabel, anchor=north] at ({#1+\mw/2},{-\mh-0.12}) {#3};
		\node[figlabel, anchor=north] at ({#1+\mw/2},{-\mh-0.40}) {#4};
		}
		\newcommand{\availregion}[6]{%
			\draw[region=#2]
			({#1+(#3-1)*\pitch-0.055},{-(#5-1)*\pitch+0.055}) rectangle
			({#1+(#4-1)*\pitch+\cell+0.055},{-(#6-1)*\pitch-\cell-0.055});
		}

		\availpanel{0.00}{(a) Aliasing}{All \avN{} instances}{}
		\availregion{0.00}{agnwarm}{1}{\avD}{1}{\avN}

		\availpanel{3.35}{(b) Filtering}{\avComplete{} complete instance}{}
		\foreach \r in \avCompleteRows { \availregion{3.35}{dirblue}{1}{\avD}{\r}{\r} }

		\availpanel{6.70}{(c) Generative restoration}{All \avN{} instances}{}
		\foreach \r in {1,...,\avN}
		\foreach \c in {1,...,\avD}
		\fill[gencell, draw=none] ({6.70+(\c-1)*\pitch},{-(\r-1)*\pitch})
		rectangle ++(\cell,-\cell);
		\foreach \r/\c in \avObservedCells
		\fill[obscell] ({6.70+(\c-1)*\pitch},{-(\r-1)*\pitch})
		rectangle ++(\cell,-\cell);
		\availregion{6.70}{gengreen}{1}{\avD}{1}{\avN}

		\fill[matrixbed] (10.05,0.49) rectangle ++(0.36,-0.36);
		\fill[obscell] (10.08,0.46) rectangle ++(\cell,-\cell);
		\node[figmuted, anchor=west] at (10.49,0.31) {Observed};
		\fill[matrixbed] (10.05,0.03) rectangle ++(0.36,-0.36);
		\node[figmuted, anchor=west] at (10.49,-0.15) {Missing};
		\fill[matrixbed] (10.05,-0.43) rectangle ++(0.36,-0.36);
		\fill[gencell] (10.08,-0.46) rectangle ++(\cell,-\cell);
		\node[figmuted, anchor=west] at (10.49,-0.61) {Generated};
	\end{tikzpicture}
	\caption{The \emph{usable} training instances for each of the approaches, on \Cref{ex:running} under MCAR with $p = \avP$.}
	\label{fig:availability}
\end{figure}

%% file: sections/experiments.tex
\section{Experiments}
\label{sec:experiments}
In this section, we solve and evaluate the synthetic controlled study of \Cref{ex:running} to isolate the dimension-budget dependency in \Cref{thm:data-requirements}.
Then, we benchmark \emph{restricted training} (training in which acquisitions are restricted to the features available in each instance) against generative restoration on eight common AFA datasets with synthetic missingness, and connect the results to the theory.

\subsection{Synthetic Controlled Study (\Cref{ex:running})}
\label{sec:exp-exact}
To test the three approaches and our theory, we solve \Cref{ex:running} exactly.
The problem is parameterized by both the dimension $d$ and the acquisition budget $b$, so we sweep the dimension over $d \in \{6, 8, 10\}$ with a fixed acquisition budget of $b = 2$ and sweep the budget over $b \in \{2, 3, 4\}$ with a fixed dimension of $d = 10$, both under MCAR with missingness rate fixed at $p = 0.6$.
The approaches first estimate the acquisition dynamics $\widehat p(x_a \mid s)$ and the expected terminal losses $\widehat V_0(s)$ from the usable training instances with a fixed Bayes predictor, using a pseudocount of $1/2$ (\Cref{app:pseudocount}).
Then, they solve their respective Bellman problem exactly with dynamic programming using these estimates.
Finally, they are evaluated against the optimal oracle policy in terms of evaluation regret.

\begin{figure}[!tb]
	\centering
	\includegraphics[width=\textwidth]{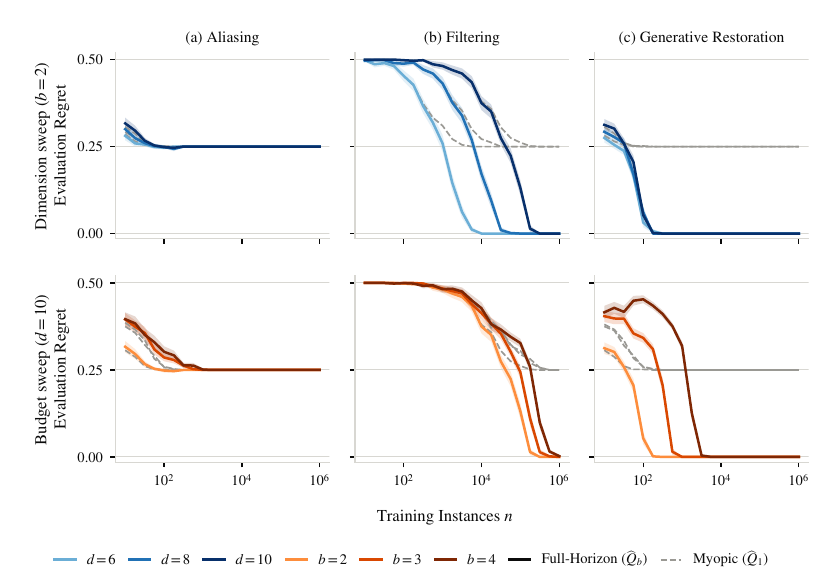}
	\caption{Evaluation regret of the myopic $\widehat Q_1$ (gray, dashed) and full planning-horizon $\widehat Q_b$ (solid) methods, across the three learning approaches swept over the dimension $d$ (top row) and the acquisition budget $b$ (bottom row), under MCAR at $p = 0.6$, with 95\% confidence bands.}
	\label{fig:exact-study}
\end{figure}

\Cref{fig:exact-study} illustrates the myopic policy $\widehat Q_1$ along with full-horizon planning $\widehat Q_b$ on the same instances over 150 independent replicates, each with a freshly sampled training set and missingness masks, of the three approaches for \Cref{ex:running}.
As expected from the one-step values in \Cref{sec:training-missingness}, the myopic policy prefers the shortcut and only achieves a regret of $1/4$ across all configurations.
By contrast, the full-horizon filtering and restoration policies achieve zero regret given enough training data.
In the dimension sweep, the filtering approach becomes more data-intensive as $d$ increases, whereas restoration converges at the same training size across the tested dimensions.
In contrast, in the budget sweep, increasing $b$ lengthens the optimal acquisition route and consequently increases restoration's data requirements at fixed $d$, reflecting what is described in \Cref{thm:data-requirements} in both sweeps.
Finally, the aliasing approach converges to the shortcut and a regret of $1/4$, since the relevant branch block is available with probability $(1-p)^{b-1} < 1/2$ (\Cref{eq:alias-shortcut}).

\Cref{fig:exact-collapse} plots the full-horizon regret of the dimension sweep against the effective number of training instances in \Cref{thm:data-requirements}, $n(1-p)^d$ for filtering and $n(1-p)^b$ for restoration, at $p \in \{0.3, 0.5, 0.7\}$.
The curves of all dimensions and missingness rates nearly coincide for both approaches.

\begin{figure}[!tb]
	\centering
	\includegraphics[width=\textwidth]{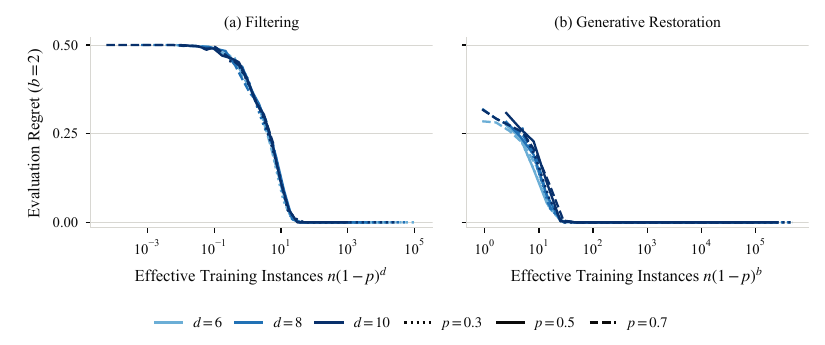}
	\caption{Evaluation regret of the full planning-horizon $\widehat Q_b$ policies of filtering (a) and generative restoration (b) in the dimension sweep at $b = 2$, against the effective number of training instances $n(1-p)^d$ and $n(1-p)^b$, respectively, for $d \in \{6, 8, 10\}$ and $p \in \{0.3, 0.5, 0.7\}$.}
	\label{fig:exact-collapse}
\end{figure}

\subsection{AFA Methods under \AFAITD{}}
\label{sec:main-results}
\begin{figure}[!tb]
	\centering
	\includegraphics[width=\textwidth]{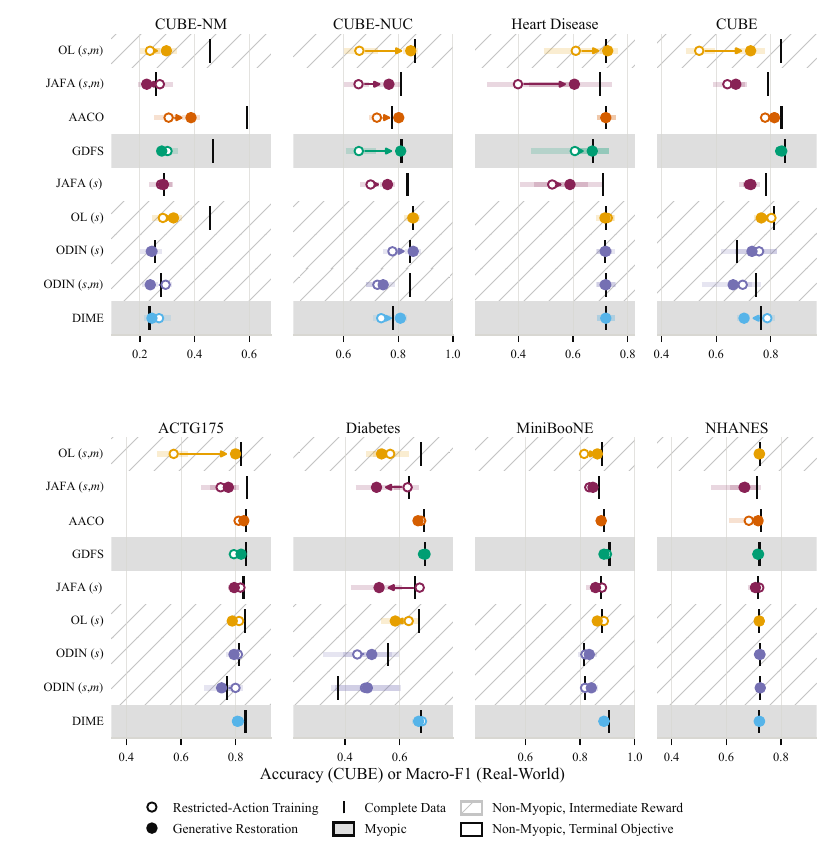}
	\caption{Restricted training vs. generative restoration under MCAR at $p = 0.7$.
		Arrows point from restricted training to generative restoration, and bars are 95\% bootstrap intervals over five splits.
		Methods and datasets are ordered by descending mean missingness damage.
		Gray rows denote myopic methods (\DIME{}, \GDFS{}), hatched rows denote non-myopic methods with intermediate predictive rewards (\OL{}, \ODIN{}), and white rows denote non-myopic acquisition with a terminal prediction objective (\AACO{}, \JAFA{}).}
	\label{fig:main-summary}
\end{figure}

\paragraph{Evaluation Setup and Protocol.}
We use the AFA benchmark framework of \citet{Schutz2025} and adapt it to our \AFAITD{} problem setting. We evaluate six AFA methods: the myopic \DIME{}~\citep{gadgil2024estimating} and \GDFS{}~\citep{covert2023learning} methods, and the non-myopic \AACO{}~\citep{Valancius2024}, \JAFA{}~\citep{Shim2018}, \OL{}~\citep{Kachuee2019}, and \ODIN{}~\citep{ODIN2019} methods. For each method, we train three variants: a complete-data version without any missingness, another with generative restoration applied to the training data beforehand, and a restricted training version with acquisitions restricted to available features.
The non-myopic RL-based methods \JAFA{}, \OL{}, and \ODIN{} are each represented by two separate methods, depending on whether they include the missingness mask in the state (e.g., \OL{} $(s,m)$) or not (e.g., \OL{} $(s)$), corresponding to the filtering and aliasing approaches, respectively.
For restoration, we fit a label-conditioned Partial Variational Autoencoder (PVAE)~\citep{ma2019eddi,ODIN2019} on incomplete training data and replace only missing values with a joint sample conditioned on the observed features and label.

We use the benchmark's synthetic CUBE~\citep{ODIN2019}, CUBE-NM~\citep{Schutz2025}, and CUBE with non-uniform costs (CUBE-NUC) datasets.
For real-world datasets, we use ACTG175~\citep{Hammer1996ACTG175}, Diabetes~\citep{Kachuee2019}, NHANES mortality (NHANES)~\citep{Erion2022CoAI}, MiniBooNE~\citep{Roe2005MiniBooNE}, and Heart Disease~\citep{Janosi1989HeartDisease}.
We synthetically induce MCAR, MAR, logistic MNAR, and self-masking MNAR at rates $p \in \{0.3, 0.5, 0.7\}$.
We use five dataset splits, pairing restricted and restored treatments within each split.
The predictor is trained before masking and held fixed across treatments, isolating the effect on acquisition.
Missingness masks remain fixed across epochs and acquisition episodes; evaluation instances are fully observed.
We report mean accuracy on the synthetic datasets and mean macro-F1 on the real-world datasets, at each dataset's largest predefined acquisition budget under \Cref{eq:afaopt}.
RL training budgets are selected by convergence on complete data and held fixed across treatments.
Exact budgets, optimization settings, missingness mechanisms, and method adaptations are given in \Cref{app:experimental-details}.

\Cref{fig:main-summary} illustrates our primary findings under MCAR at $p = 0.7$. We only report results for the highest missingness rate as we observe the same patterns, but weaker, at the other missingness rates.
The figure orders both methods and datasets according to their mean \emph{missingness damage}: the difference in performance between the complete-data version (vertical line) and the restricted version (hollow circle).
First, we note that the myopic method \DIME{} is the least damaged, and the three most damaged methods are all non-myopic, consistent with \Cref{prop:restriction}.
Second, restoration (filled circle) recovers part of the damage across a variety of methods and datasets, indicating that the advantage that \Cref{thm:data-requirements} predicted in the tabular setting can hold true for methods using function approximation on real-world datasets.
However, restoration is far from a ``free lunch'': in some configurations, restoration performs worse than restricted training.
This is because missingness can remove redundant or noisy features, making the problem easier; restoring all features then returns the method to the harder original problem.
Among the datasets, CUBE-NM is the most affected, aligning with its non-myopic structure~\citep{Schutz2025}.
In contrast, we note that the two least affected datasets are the real-world MiniBooNE and NHANES mortality datasets.
We further investigate the structure of all eight datasets and its relation to missingness damage in \Cref{sec:dataset-structure,sec:adaptivity}, report the computational cost of restoration in \Cref{app:compute}, and repeat the comparison under MAR and MNAR in \Cref{app:other-missingness}.

\subsection{Dataset Structure}
\label{sec:dataset-structure}
\Cref{eq:alias-shortcut} shows that aliasing can converge to the wrong policy, but not that every dataset must be damaged by the restricted action space.
If missingness only blocks irrelevant features, then restricting the action space does not change the optimal policy.
Similarly, if several strong acquisition routes are interchangeable, blocking one route can have little effect because another route remains available.
Missingness damage also depends on the acquisition structure of a dataset, for which we introduce two empirical measures.
To compute both quantities, we first split the training data into a selection and a validation set.
Then, we sample $L$ feasible acquisition routes (action sequences) $R_1, \ldots, R_L$ under budget $b$.
Let $V_{\mathrm{sel}}(R)$ and $V_{\mathrm{val}}(R)$ denote the static scores of route $R$ on the selection and validation splits, respectively, and let
\[
	R_\star \in \argmax_{R \in \{R_1, \ldots, R_L\}} V_{\mathrm{sel}}(R), \ V_{\mathrm{static}} \coloneqq V_{\mathrm{val}}(R_\star).
\]

We define \emph{route sensitivity} as
\[
	\Delta_{\mathrm{route}} \coloneqq V_{\mathrm{static}} - \frac{1}{L} \sum_{k = 1}^L V_{\mathrm{val}}(R_k).
\]
A large $\Delta_{\mathrm{route}}$ indicates that the selected route outperforms a typical feasible route, while a small value indicates little separation in their validation scores.
To measure whether high-performing routes use the same acquisitions, let
\[
	w_k \coloneqq \left[V_{\mathrm{sel}}(R_k) - V_{\mathrm{sel}}(\varnothing)\right]_+
\]
denote the positive predictive gain over the empty route.
Treating each route as a set of legal acquisition actions, we define \emph{performance-weighted route overlap} as
\[
	\omega_{\mathrm{route}} \coloneqq \frac{ \sum_{1 \leq i < j \leq L} w_i w_j \dfrac{|R_i \cap R_j|}{|R_i \cup R_j|}}{\sum_{1 \leq i < j \leq L} w_i w_j}.
\]
A large value indicates that predictive gain is concentrated on routes with overlapping acquisition sets, whereas a small value indicates that the gain is distributed across structurally distinct routes.

\input{experiments/results/route_structure}

\Cref{tab:route-structure} shows $V_{\mathrm{static}}$, $\Delta_{\mathrm{route}}$, and $\omega_{\mathrm{route}}$ across the datasets, with additional results available in \Cref{app:other-missingness}.

\paragraph{Observations on \Cref{tab:route-structure}.}
CUBE-NM has by far the lowest $V_{\mathrm{static}}$, indicating that a static route does not yield good performance.
It also has the lowest $\omega_{\mathrm{route}}$, so what predictive gain there is comes from structurally distinct routes.
In contrast, NHANES has the largest $\omega_{\mathrm{route}}$ and the smallest $\Delta_{\mathrm{route}}$, indicating that we should expect low missingness damage.
We want to highlight MiniBooNE's interesting structure; it has the highest $V_{\mathrm{static}}$ while also having the second-smallest $\omega_{\mathrm{route}}$ and a small $\Delta_{\mathrm{route}}$, indicating that there exist a few top-performing routes with minimal overlap.

\subsection{Missingness Damage and Adaptive Structure}
\label{sec:adaptivity}
By \Cref{prop:restriction}, MCAR leaves one-step values unchanged and can only lower multi-step values.
Thus, only methods that exploit multi-step acquisitions can have their exact values damaged.
\Cref{tab:adaptivity} compares each method's complete-data score with the static route $V_{\mathrm{static}}$ of \Cref{tab:route-structure}.
\DIME{} and \ODIN{} score below the static route on average, and on CUBE-NM, where the other methods are damaged the most, they score $0.24$ and $0.27$ against $V_{\mathrm{static}} = 0.32$.
In contrast, \GDFS{}, \AACO{}, and \OL{} exceed the static route and are damaged.
\GDFS{} is myopic, so its exact one-step values are unchanged, but it estimates them from its own acquisition trajectories: a training instance follows the evaluation trajectory to depth $t$ only if all $t$ acquired features are available, which under MCAR occurs with probability $(1-p)^t$, the same factor as in \Cref{thm:data-requirements}.
Restoration removes this factor, in line with its gain for \GDFS{}.
\JAFA{} is the exception, scoring below the static route on average while being damaged on Heart Disease and CUBE-NUC.
Finally, the correlation between margin and damage across method--dataset cells is moderate and driven mainly by CUBE-NM.

\input{experiments/results/adaptivity}

%% file: experiments/results/route_structure.tex
\begin{table}[H]
	\centering
	\caption{Fixed-route structure. Each column ranks the datasets by that quantity alone, largest first, so the three rankings are independent.
		Mean $\pm$ standard error over five dataset splits.
		Static references and route weights are selected on training data; $V_{\mathrm{static}}$ and $\Delta_{\mathrm{route}}$ are reported on validation data.
		Each dataset is at its largest evaluation budget $b$, which is 15 for Diabetes and MiniBooNE, 14 for CUBE-NM, 10 for ACTG175, CUBE and NHANES, 7 for CUBE-NUC and Heart disease.}
	\label{tab:route-structure}
	{\footnotesize
		\setlength{\tabcolsep}{3pt}
		\begin{tabular}{lclclc}
			\toprule
			\multicolumn{2}{c}{$V_{\mathrm{static}}$} & \multicolumn{2}{c}{$\Delta_{\mathrm{route}}$} & \multicolumn{2}{c}{$\omega_{\mathrm{route}}$}                                                         \\
			\midrule
			MiniBooNE                                 & $0.894 \pm 0.004$                             & CUBE                                          & $0.245 \pm 0.010$ & NHANES        & $0.669 \pm 0.000$ \\
			CUBE-NUC                                  & $0.847 \pm 0.019$                             & ACTG175                                       & $0.175 \pm 0.012$ & CUBE          & $0.342 \pm 0.000$ \\
			ACTG175                                   & $0.846 \pm 0.012$                             & CUBE-NUC                                      & $0.174 \pm 0.017$ & ACTG175       & $0.287 \pm 0.001$ \\
			CUBE                                      & $0.824 \pm 0.010$                             & Diabetes                                      & $0.166 \pm 0.001$ & Heart disease & $0.264 \pm 0.003$ \\
			NHANES                                    & $0.731 \pm 0.005$                             & Heart disease                                 & $0.127 \pm 0.015$ & CUBE-NUC      & $0.221 \pm 0.000$ \\
			Heart disease                             & $0.720 \pm 0.019$                             & CUBE-NM                                       & $0.059 \pm 0.010$ & Diabetes      & $0.205 \pm 0.000$ \\
			Diabetes                                  & $0.679 \pm 0.003$                             & MiniBooNE                                     & $0.058 \pm 0.003$ & MiniBooNE     & $0.181 \pm 0.000$ \\
			CUBE-NM                                   & $0.317 \pm 0.020$                             & NHANES                                        & $0.009 \pm 0.002$ & CUBE-NM       & $0.164 \pm 0.000$ \\
			\bottomrule
		\end{tabular}
	}
\end{table}

%% file: experiments/results/adaptivity.tex
\begin{table}[H]
\centering
\caption{Complete-data score, its margin over the static route $V_{\mathrm{static}}$ (\Cref{tab:route-structure}), and missingness damage under MCAR at $p=0.7$, averaged over the eight datasets and, for \JAFA{}, \OL{} and \ODIN{}, over both state representations.
Across method--dataset cells, Pearson $r=0.59$ and Spearman $\rho=0.34$ between margin and damage.}
\label{tab:adaptivity}
{\footnotesize
\begin{tabular}{lccc}
\toprule
Method & Complete & Margin over $V_{\mathrm{static}}$ & Damage \\
\midrule
\DIME{} & $0.705$ & $-0.027$ & $+0.002$ \\
\ODIN{} & $0.667$ & $-0.065$ & $+0.009$ \\
\GDFS{} & $0.745$ & $+0.013$ & $+0.059$ \\
\AACO{} & $0.758$ & $+0.026$ & $+0.062$ \\
\JAFA{} & $0.707$ & $-0.026$ & $+0.072$ \\
\OL{} & $0.745$ & $+0.013$ & $+0.092$ \\
\bottomrule
\end{tabular}
}
\end{table}

%% file: sections/conclusion.tex
\section{Conclusion}
\label{sec:conclusion}
We have investigated the \AFAITD{} problem setting and found that:
(i) in line with our theory (\Cref{prop:restriction}), incomplete training data mainly damages methods that exploit multi-step acquisitions;
(ii) in the tabular setting, the data requirement of filtering grows exponentially with the dimension, whereas that of generative restoration grows only with the acquisition budget (\Cref{thm:data-requirements});
(iii) generative restoration can in practice reduce the resulting performance gap for non-myopic methods.
We hope that our work opens up the development of AFA methods that leverage the theoretical framework of \AFAITD{}.
For example, by leveraging AFA learning and generative learning jointly, it might be possible to restore only the informative features and leave the uninformative ones blocked.

%% file: sections/appendix.tex
\newpage

\appendix

\input{sections/appendix/core-proofs}
\input{sections/appendix/results-and-setup}

%% file: sections/appendix/core-proofs.tex
\section{Proofs}
\label{app:proofs}

\subsection{Proof of \texorpdfstring{\Cref{prop:restriction}}{the restriction proposition}}
\label{app:proof-restriction}

\begin{proof}
	The proof proceeds by induction on the number of remaining acquisitions $k$.
	Under MCAR, we know that $p(y \mid s, m) = p(y \mid s)$, so the stopping values must satisfy
	$$
		\begin{aligned}
			V_0^{\mathrm{train}}(s,m) & =\E_{y\mid s,m}[-\ell(f(s),y)]                       \\
			                          & =\E_{y\mid s}[-\ell(f(s),y)]=V_0^{\mathrm{eval}}(s).
		\end{aligned}
	$$
	This establishes the case for $k = 0$.
	For the induction step, we assume that $V_{k - 1}^{\mathrm{train}}(s', m) \leq  V_{k - 1}^{\mathrm{eval}}(s')$ at every successor state $s'$ reachable under mask $m$.
	Consequently, for all training actions $a \in \Atrain_b(S, m)$, we have
	$$
		\begin{aligned}
			Q_k^{\mathrm{train}}(s,m,a) & =\E_{x_a\mid s,m}\left[V_{k-1}^{\mathrm{train}}(s',m)\right]                         \\
			                            & \leq\E_{x_a\mid s}\left[V_{k-1}^{\mathrm{eval}}(s')\right]=Q_k^{\mathrm{eval}}(s,a),
		\end{aligned}
	$$
	where the inequality step uses the induction hypothesis and MCAR.
	For the value function, we have
	$$
		\begin{aligned}
			V_k^{\mathrm{train}}(s,m) & =\max\left\{V_0^{\mathrm{train}}(s,m),\max_{a\in\Atrain_b(S,m)}Q_k^{\mathrm{train}}(s,m,a)\right\}                   \\
			                          & \leq\max\left\{V_0^{\mathrm{eval}}(s),\max_{a\in\Atrain_b(S,m)}Q_k^{\mathrm{eval}}(s,a)\right\}                      \\
			                          & \leq\max\left\{V_0^{\mathrm{eval}}(s),\max_{a\in\Aeval_b(S)}Q_k^{\mathrm{eval}}(s,a)\right\}=V_k^{\mathrm{eval}}(s),
		\end{aligned}
	$$
	where the last inequality follows from the fact that $\Atrain_b(S, m) \subseteq \Aeval_b(S)$.

	For $k = 1$, we further have
	$$
		Q_1^{\mathrm{train}}(s,m,a)=\E_{x_a\mid s,m}[V_0^{\mathrm{train}}(s',m)]=\E_{x_a\mid s}[V_0^{\mathrm{eval}}(s')]=Q_1^{\mathrm{eval}}(s,a),
	$$
	which concludes the proof.
\end{proof}

\subsection{Proof of \texorpdfstring{\Cref{thm:data-requirements}}{the data requirements theorem}}
\label{app:proof-data-requirements}

\begin{proof}
	Both the filtering and restoration approaches need to estimate two things: the expected stopping values $V_0(s)$ and the acquisition dynamics $p(x_a \mid s)$ for the Bellman recursion, but they differ in how many usable training instances contribute to these estimates.
	The proof will proceed in the following steps:
	\begin{enumerate}
		\item We count the number of usable training instances for both approaches (\Cref{sec:thm_proof_step1}).
		\item We count the total number of estimates each approach needs (\Cref{sec:thm_proof_step2}).
		\item We bound the error of each estimate type (\Cref{sec:thm_proof_step3}).
		\item Finally, we bound the total evaluation regret in terms of errors of the individual estimates (\Cref{sec:thm_proof_step4}).
	\end{enumerate}
	For the bounding steps, we make use of the following two inequalities:

	\paragraph{Multiplicative Chernoff bound}

	Let $X = \sum_{i = 1}^n X_i$ be a sum of independent binary random variables with mean $\mu = \E[X]$.
	For a fixed $\gamma \in (0, 1)$ the multiplicative Chernoff bound is
	\begin{equation*}
		P\left(X < (1 - \gamma) \mu \right) \leq \exp \left(-\frac{\gamma^2 \mu}{2}\right).
	\end{equation*}

	\paragraph{Hoeffding bound}

	Let $\bar Z = \frac 1N \sum_{i = 1}^N Z_i$ be an average of $N$ independent random variables, each taking values in an interval of length $R$.
	For any $t > 0$ the one-sided Hoeffding bound is
	\begin{align*}
		P\left(\bar Z - \E[\bar Z] \geq t \right) & \leq \exp \left(-\frac{2Nt^2}{R^2}\right)
	\end{align*}
	with the two-sided form
	\begin{equation*}
		P\left(|\bar Z - \E[\bar Z]| \geq t \right) \leq 2 \exp \left(-\frac{2Nt^2}{R^2}\right).
	\end{equation*}

	\subsubsection{Count the Number of Usable Training Instances}
	\label{sec:thm_proof_step1}

	Let $N_{s}^\text{gen}$ and $N_{s,a}^\text{gen}$ denote the number of training instances that the restoration approach can use to estimate $V_0(s)$ and $p(x_a \mid s)$, respectively.
	$N_{s}^\text{gen}$ counts the number of instances that match $x_S$ and have every feature in $S$ available, while $N_{s,a}^\text{gen}$ counts instances that additionally have feature $a$ available,
	$$
		\begin{aligned}
			N_s^\text{gen}      & \coloneqq \sum_{i = 1}^n \mathbf1 \{x_S^{(i)} = x_S,\ S\subseteq\bar M^{(i)}\},          \\
			N_{s, a}^\text{gen} & \coloneqq \sum_{i = 1}^n \mathbf1 \{x_S^{(i)} = x_S,\ S\cup\{a\}\subseteq\bar M^{(i)}\}.
		\end{aligned}
	$$

	Under MCAR, the feature values and labels are independent of availability, thus
	$$
		\begin{aligned}
			\E[N_s^\text{gen}]      & = n P_s(1-p)^{|S|},     \\
			\E[N_{s, a}^\text{gen}] & = n P_s(1-p)^{|S| + 1},
		\end{aligned}
	$$
	where $P_s$ is the probability that a random data instance agrees with the observed values $x_S$.
	Every acquisition costs at least one, which means that $|S| \leq b$ and an available next acquisition satisfies $|S| + 1 \leq b$.
	In contrast, the filtering approach can only use complete instances
	\begin{equation*}
		\E[N_s^\text{filt}] = \E[N_{s, a}^\text{filt}] = n P_s(1-p)^d.
	\end{equation*}

	For brevity, we write the common bound for both methods
	$$
		\E[N_s] \geq n q P_s, \ \E[N_{s,a}] \geq nqP_s,
	$$
	where
	$$
		q =
		\begin{cases}
			(1 - p)^d, & \text{for filtering},   \\
			(1 - p)^b, & \text{for restoration}.
		\end{cases}
	$$
	If $N_s = 0$, we set $\widehat V_0(s)$ to a fixed value in $[-\ell_{\max}, 0]$, and if $N_{s,a} = 0$, we set $\widehat p(\cdot \mid s)$ to a fixed distribution on $\mathcal X_a$, e.g., the uniform distribution.

	\subsubsection{Count the Number of Estimates}
	\label{sec:thm_proof_step2}

	With $j$ acquired features, there are $\binom dj$ feature sets and at most $K^j$ possible value assignments.
	Each state $s$ has one stopping-value estimate $\widehat{V}_0(s)$ and at most $d$ acquisition dynamics to estimate $\widehat{p}(\cdot \mid s)$.
	The total number of estimates is therefore at most
	$$
		(d + 1) \sum_{j = 0}^b \binom dj K^j \leq (d + 1) \sum_{j = 0}^b (Kd)^j \leq 2(d + 1) (Kd)^b \leq (Kd)^{b + 2},
	$$
	and we write $J \coloneqq (Kd)^{b + 2}$ for an upper bound on the number of estimates needed.

	\subsubsection{Bound the Error of Each Estimate}
	\label{sec:thm_proof_step3}

	\paragraph{Failure possibilities and confidence allocation.}
	For each stopping-value estimate $\widehat V_0(s)$, failure can occur if
	(i) the number of usable instances is too small, or
	(ii) the estimate deviates too far in either direction.
	For each transition-distribution estimate $\widehat p(\cdot\mid s)$, failure can occur if
	(i) the number of usable instances is too small, or
	(ii) for some $A\subseteq\mathcal X_a$, the empirical probability $\widehat p(A\mid s)$ deviates too far from $p(A\mid s)$.

	There are at most $J$ estimates of both types combined.
	Hence, there are at most $J$ count-failure events, $2J$ stopping-value deviation events, and $2^K J$ transition-subset deviation events.
	Since $K\geq2$, their total number is at most
	\[
		J(1+2+2^K)\leq 2^{K+1}J.
	\]
	Thus, we allocate a failure probability of $\delta/(2^{K+1}J)$ to each event and define
	\[
		\Lambda_J
		\coloneqq
		\log\frac{2^{K+1}J}{\delta}
		=
		(b+2)\log(Kd)+(K+1)\log 2+\log\frac1\delta.
	\]
	Thus, whenever an individual failure probability is bounded by $\exp(-A)$, requiring $A\geq\Lambda_J$ makes it at most $\delta/(2^{K+1}J)$.
	Finally, after deriving the individual bounds, a union bound controls all estimates simultaneously with probability at least $1-\delta$.

	\paragraph{Concentration of the sample counts.}
	The count $N_s$ is a sum of independent binary variables, so we apply the Chernoff bound at $\gamma = 1/2$ with $\mu = \E[N_s] \geq nqP_s$,
	$$
		\begin{aligned}
			P \left(N_s < \frac{n q P_s}{2}\right) & \leq P \left(N_s < \frac{\E[N_s]}{2}\right) \\
			                                       & \leq \exp\left(-\frac{\E[N_s]}{8}\right)    \\
			                                       & \leq \exp\left(-\frac{n q P_s}{8}\right).
		\end{aligned}
	$$
	Hence, whenever $nqP_s \geq 8\Lambda_J$,
	$$
		P\left(N_s < \frac{nqP_s}{2}\right) \leq e^{-\Lambda_J}.
	$$
	Since $\E[N_{s, a}] \geq nqP_s$ as well, the same bound holds for $N_{s, a}$.

	\paragraph{Concentration of the estimates.}
	Now suppose that $nqP_s \geq 8\Lambda_J$.
	On the event $N_s \geq nqP_s/2$, the two-sided Hoeffding bound gives
	$$
		P\left(|\widehat{V}_0(s)-V_0(s)|>t,\,
		N_s\geq\frac{nqP_s}{2}\right)
		\leq2\exp\left(-\frac{nqP_st^2}{\ell_{\max}^2}\right).
	$$
	Choosing
	$$
		t = \ell_{\max}\sqrt{\frac{\Lambda_J}{nqP_s}},
	$$
	makes this probability at most $2e^{-\Lambda_J}$.
	For a transition estimate, we write
	\[
		\Delta(x_a)
		\coloneqq
		\widehat p(x_a\mid s)-p(x_a\mid s)
	\]
	and define its total-variation error by
	\[
		\operatorname{TV}(\widehat p,p)
		\coloneqq
		\frac12\sum_{x_a\in\mathcal X_a}|\Delta(x_a)|.
	\]
	Since $\widehat p(\cdot\mid s)$ and $p(\cdot\mid s)$ both sum to one, $\sum_{x_a}\Delta(x_a)=0$.
	Consequently, the total positive and negative deviations are equal, and
	\[
		\operatorname{TV}(\widehat p,p)
		=
		\sum_{x_a:\,\Delta(x_a)>0}\Delta(x_a)
		=
		\max_{A\subseteq\mathcal X_a}
		\bigl(\widehat p(A\mid s)-p(A\mid s)\bigr).
	\]
	Thus, it is sufficient to control the empirical probability of every subset $A\subseteq\mathcal X_a$.
	Conditional on $N_{s,a}=N$, for any fixed $A$ we have
	\[
		\widehat p(A\mid s)
		=
		\frac1N\sum_{i=1}^N
		\mathbf 1\{x_a^{(i)}\in A\},
	\]
	which is an average of independent binary variables with expectation $p(A\mid s)$.
	The one-sided Hoeffding bound and a union bound over the at most $2^K$ subsets of $\mathcal X_a$ therefore give
	\[
		P\left(
		\operatorname{TV}(\widehat p,p)>t
		\,\middle|\,
		N_{s,a}=N
		\right)
		\leq
		2^K\exp(-2Nt^2).
	\]
	Hence, on the event $N_{s,a}\geq nqP_s/2$,
	\[
		P\left(
		\operatorname{TV}(\widehat p,p)>t,\,
		N_{s,a}\geq\frac{nqP_s}{2}
		\right)
		\leq
		2^K\exp(-nqP_st^2).
	\]
	Taking $t=\sqrt{\Lambda_J/(nqP_s)}$ makes this probability at most $2^Ke^{-\Lambda_J}$.

	\paragraph{A uniform bound over all states.}
	To express both kinds of deviation error (in $\widehat{V}$ and $\widehat{p}$) on the scale of the value function, let
	$$
		e_s\coloneqq\max\left\{|\widehat V_0(s)-V_0(s)|,\ell_{\max}\max_{a\in\Aeval_b(S)}\operatorname{TV}\!\left(\widehat p(\cdot\mid s),p(\cdot\mid s)\right)\right\}.
	$$
	If no acquisition is legal, only the stopping-value error is present.
	The concentration bound above applies when $nqP_s \geq 8\Lambda_J$.
	When instead $nqP_s < 8\Lambda_J$, the sample counts may be small or zero, so we make no concentration claim.
	However, boundedness still gives $e_s \leq \ell_{\max}$.
	The two cases can therefore be combined as
	$$
		e_s\leq\ell_{\max}\min\left\{1,\sqrt{\frac{8\Lambda_J}{nqP_s}}\right\}.
	$$
	Indeed, when $nqP_s < 8\Lambda_J$ the minimum equals $1$, while when $nqP_s \geq 8\Lambda_J$ this is only a constant-factor weakening of the concentration bound above.
	There are at most $J$ estimates of both types combined.
	A union bound over their count failures, stopping-value deviations, and transition-distribution deviations is at most
	$$
		J(1 + 2 + 2^K) e^{-\Lambda_J} \leq 2^{K + 1}J e^{-\Lambda_J} = \delta.
	$$
	Consequently, with probability at least $1-\delta$, the bound on $e_s$ holds at every state with $P_s>0$ simultaneously.

	\paragraph{The error budget in the notation of the theorem.}
	It remains to replace $\Lambda_J$ by the quantity $\Lambda \coloneqq b\log(Kd)+K+\log\frac2\delta$ used in the theorem.
	Since $b\geq1$, $K\geq2$, and $\delta<1$, we have
	$$
		(b+2)\log(Kd)\leq3b\log(Kd),\
		(K+1) \log 2 \leq 2K,\
		\log\frac1\delta\leq\log\frac2\delta.
	$$
	Thus $\Lambda_J\leq3\Lambda$ and $8\Lambda_J\leq24\Lambda$.
	Since $\min\{1, \cdot\}$ is nondecreasing, we obtain
	\begin{equation}
		e_s\leq\ell_{\max}\min\left\{1,\sqrt{\frac{24\Lambda}{nqP_s}}\right\}.
		\label{eq:proof-local-error}
	\end{equation}

	\subsubsection{Bound the Evaluation Regret in Terms of Errors of the Individual Estimates}
	\label{sec:thm_proof_step4}

	\paragraph{How local errors affect value estimates of the initial state.}
	Fix a deterministic policy $\pi$.
	Let $V_k^\pi(s)$ be its true value with at most $k$ acquisitions remaining, and let $\widehat V_k^\pi(s)$ be its value using the empirical estimates.
	Both values lie in $[-\ell_{\max},0]$, since the only reward is the negative loss when the policy stops.
	This also holds at states without training samples because their fallback estimates (\Cref{sec:thm_proof_step1}) lie in the same ranges.
	If the policy stops at $s$, its value error is $|\widehat V_0(s)-V_0(s)|\leq e_s$.
	Otherwise, let $a$ be its chosen acquisition.
	The current value error can be decomposed in terms of transition probability errors and next-state value errors:
	$$
		\begin{aligned}
			\widehat V_k^\pi(s)-V_k^\pi(s) & =\sum_{x_a\in\mathcal X_a}\widehat p(x_a\mid s)\widehat V_{k-1}^\pi(s')-\sum_{x_a\in\mathcal X_a}p(x_a\mid s)V_{k-1}^\pi(s') \\
			                               & =\sum_{x_a\in\mathcal X_a}\bigl(\widehat p(x_a\mid s)-p(x_a\mid s)\bigr)\widehat V_{k-1}^\pi(s')                             \\
			                               & \quad+\sum_{x_a\in\mathcal X_a}p(x_a\mid s)\bigl(\widehat V_{k-1}^\pi(s')-V_{k-1}^\pi(s')\bigr).
		\end{aligned}
	$$
	Since the successor values lie in an interval of length at most $\ell_{\max}$, the definition of total variation gives
	$$
		\left|\sum_{x_a\in\mathcal X_a}\bigl(\widehat p(x_a\mid s)-p(x_a\mid s)\bigr)\widehat V_{k-1}^\pi(s')\right|
		\leq\ell_{\max}\operatorname{TV}(\widehat p,p)\leq e_s.
	$$
	Indeed, subtracting the midpoint of the successor-value range from every value leaves the sum unchanged, after which the bound follows from $\operatorname{TV}(\widehat p,p)=\frac12\|\widehat p-p\|_1$.
	The second sum carries the remaining value errors forward, weighted by the true probabilities.
	Taking absolute values consequently gives
	\begin{equation}
		|\widehat V_k^\pi(s)-V_k^\pi(s)|\leq e_s+\sum_{x_a\in\mathcal X_a}p(x_a\mid s)|\widehat V_{k-1}^\pi(s')-V_{k-1}^\pi(s')|.
		\label{eq:proof-error-recursion}
	\end{equation}

	Starting at the initial state $s_0$ and unrolling the recursion in \Cref{eq:proof-error-recursion},
	\begin{equation}
		|\widehat{V}^\pi_b(s_0)-V^\pi_b(s_0)|\leq\sum_{t=0}^b\sum_{\substack{s\text{ reached after}\\t\text{ acquisitions}}}r_{s} e_s,
		\label{eq:proof-weighted-error}
	\end{equation}
	where $r_{s}$ is the probability that $\pi$ reaches $s$.

	\paragraph{Summing the errors over acquisitions.}
	After $t$ acquisitions, a deterministic policy has observed $t$ feature values, each with at most $K$ possible outcomes.
	Its acquisition tree therefore contains at most $K^t$ states at that depth.
	Reaching a state $s=(x_S,S)$ requires an instance to match $x_S$, so $r_s\leq P_s$.
	Furthermore, for any instance the policy visits at most one state at depth $t$, and hence $\sum_{s: \ |S| = t} r_s \leq 1$.
	The sum may be smaller than one because some instances lead to earlier stopping.
	For the states at depth $t$, these observations give
	$$
		\begin{aligned}
			\sum_{s: \ |S| = t} \frac{r_s}{\sqrt{P_s}}
			 & \leq \sum_{s: \ |S| = t} \sqrt{r_s}                  \\
			 & \leq \sqrt{K^t \sum_{s: \ |S| = t}r_s} \leq K^{t/2},
		\end{aligned}
	$$
	where the last inequality uses Cauchy--Schwarz over at most $K^t$ states.
	Multiplying \Cref{eq:proof-local-error} by $r_s$ and summing gives
	$$
		\begin{aligned}
			\sum_{s: |S| = t} r_s e_s
			\leq
			\ell_{\max}\sqrt{\frac{24\Lambda}{nq}}
			\sum_{s: |S| = t}\frac{r_s}{\sqrt{P_s}}
			 & \leq\ell_{\max}\sqrt{\frac{24K^t\Lambda}{nq}}.
		\end{aligned}
	$$
	Substituting into \Cref{eq:proof-weighted-error} and summing over depths yields
	\begin{equation}
		\begin{aligned}
			|\widehat V_b^\pi-V_b^\pi| & \leq\ell_{\max}\sqrt{\frac{24\Lambda}{nq}}\sum_{t=0}^bK^{t/2} \\
			                           & \leq4\sqrt{24}\,\ell_{\max}\sqrt{\frac{K^b\Lambda}{nq}},
		\end{aligned}
		\label{eq:proof-policy-error}
	\end{equation}
	where
	$$
		\sum_{t=0}^bK^{t/2}=\frac{K^{(b+1)/2}-1}{\sqrt K-1}\leq4K^{b/2},
	$$
	for $K\geq2$.

	\paragraph{From value error to regret.}
	To bound the value estimation error in \Cref{eq:proof-policy-error} by $\varepsilon/2$, it is sufficient to require
	$$
		nq\geq C\frac{\ell_{\max}^2 K^b\Lambda}{\varepsilon^2},
	$$
	where $C>0$ is a numerical constant chosen large enough to absorb the fixed constants.
	Under this condition,
	$$
		|\widehat V_b^\pi-V_b^\pi|
		\leq\frac{4\sqrt{24}}{\sqrt C}\varepsilon.
	$$
	Choosing, for example, $C\geq1536$ ensures that
	$$
		|\widehat V_b^\pi-V_b^\pi|\leq\frac{\varepsilon}{2}.
	$$

	Both finite Bellman problems admit deterministic optimal policies.
	Let $\pi^*$ be optimal for evaluation and let $\pi_j$ be the policy returned by the chosen empirical method.
	Empirical optimality gives $\widehat V_b^{\pi_j}\geq\widehat V_b^{\pi^*}$.
	The regret can be decomposed into a sum of value estimation errors:
	$$
		\begin{aligned}
			\operatorname{Reg}(\pi_j)
			 & =V_b^{\pi^*}-V_b^{\pi_j}                        \\
			 & =\bigl(V_b^{\pi^*}-\widehat V_b^{\pi^*}\bigr)
			+\bigl(\widehat V_b^{\pi^*}-\widehat V_b^{\pi_j}\bigr)
			+\bigl(\widehat V_b^{\pi_j}-V_b^{\pi_j}\bigr)      \\
			 & \leq
			|V_b^{\pi^*}-\widehat V_b^{\pi^*}|
			+|\widehat V_b^{\pi_j}-V_b^{\pi_j}|                \\
			 & \leq\frac{\varepsilon}{2}+\frac{\varepsilon}{2}
			=\varepsilon.
		\end{aligned}
	$$
	The middle term is nonpositive by empirical optimality, and the remaining two terms are each at most $\varepsilon/2$ because the value-error bound holds for every deterministic policy.
	Finally, substituting $q=(1-p)^d$ for filtering and $q=(1-p)^b$ for restoration gives the two stated sample-size bounds.
\end{proof}

\subsection{Proof of \texorpdfstring{\Cref{prop:filter-lower}}{the filtering lower bound}}
\label{app:proof-filter-lower}

\begin{proof}
	A training instance is complete with probability $(1-p)^d$, so by a union bound, no training instance is complete with probability at least $1 - n(1-p)^d$.
	On this event, every estimate $\widehat V_0(s)$ and $\widehat p(\cdot \mid s)$ equals its fallback, so the empirical Bellman recursion returns a fixed policy $\pi_0$ that depends only on the costs, the fallbacks, and the tie-breaking rule.
	Relabeling the unit-cost features leaves all three unchanged, and hence also $\pi_0$.
	Since the optimal policy attains accuracy $1$, the regret of a policy is one minus its accuracy, and we consider the first action of $\pi_0$.
	\begin{itemize}
		\item If $\pi_0$ stops immediately, nothing is observed, and since $y$ is uniform, its accuracy is $1/2$.
		\item If $\pi_0$ acquires the shortcut $x_{2b}$, it spends the whole budget $b$ and its accuracy is $3/4$.
		\item Otherwise, $\pi_0$ acquires a unit-cost feature $x_j$, and since $d \geq 2b + 1$, we relabel the unit-cost features so that $x_j$ is noise.
		      The remaining budget $b - 1$ excludes the shortcut.
		      If $\pi_0$ acquires the context $x_1$, at most $b - 2$ features of the relevant block $B_{x_1 + 1}$ remain affordable.
		      Otherwise, $\pi_0$ observes neither $x_1$ nor the shortcut, so its acquisitions are independent of $x_1$, and it can afford at most one full block, which is the relevant one with probability at most $1/2$.
		      If a bit of the relevant block is unobserved, it is uniform and independent of the observations, so $y$ is uniform given the observations.
		      Thus, the accuracy is at most $\frac{1}{2} \cdot 1 + \frac{1}{2} \cdot \frac{1}{2} = \frac{3}{4}$.
	\end{itemize}
	Hence, the regret of $\pi_0$ is at least $1/4$ in every case.
	Finally, if $n(1-p)^d < 1 - \delta$, then $\operatorname{Reg}(\pi^{\mathrm{filter}}) \geq 1/4$ with probability greater than $\delta$.
\end{proof}

In contrast, \Cref{thm:data-requirements} guarantees the same regret for generative restoration with $n$ scaling as $(1-p)^{-b}$.

%% file: sections/appendix/results-and-setup.tex
\section{Experimental Details \& Additional Results}
\label{app:additional}

\subsection{Experimental Details}
\label{app:experimental-details}

\paragraph{Benchmark and evaluation protocol.}
We use the AFABench benchmarking framework~\citep{Schutz2025} and inherit its dataset implementations, preprocessing, predictors, method architectures, and hard-budget evaluation protocol unless stated otherwise.
We use five dataset splits for every reported setting.
CUBE-NUC contains two independently noisy copies of the CUBE features, one twice as costly as the other.
For each split, the predictor is trained before synthetic missingness is imposed and held fixed for all approaches.

\paragraph{Training missingness.}
For each dataset split, a fixed missingness mechanism is instantiated on the training split, and the missingness masks are fixed across epochs and AFA episodes.
We implement synthetic MCAR, MAR, logistic MNAR, and self-masking MNAR at missingness rates $p \in \{0.3, 0.5, 0.7\}$.
Under MCAR, features are independently unavailable with probability $p$.
For MAR and logistic MNAR, we randomly select 30\% of the features to determine the missingness probabilities of the remaining features through a logistic function.
The intercepts are calibrated to match the requested missingness rate.
The selected features remain available under MAR but are independently masked at the same rate under logistic MNAR.
Under self-masking MNAR, the missingness probability of a feature depends on its own value.
Missingness masks that would remove all acquisitions from an instance are resampled.

\paragraph{Generative restoration.}
We implement generative restoration using the PVAE architecture from~\citet{ODIN2019}, which is based on the Partial VAE of~\citet{ma2019eddi}.
For each training--validation pair with artificially induced missingness, we fit a separate PVAE on the training split and use the validation split for model selection.
During training, random input masks are applied on top of each instance's missingness, and reconstruction loss is computed only for its available features.
We then condition the PVAE on each instance's available features and label, draw a single joint sample of all $d$ features from its decoder, and replace only the unavailable values.
We use latent dimension \(20\), batch size \(128\), Adam with learning rate \(10^{-3}\), KL weight \(0.1\), and sample the additional masking probability uniformly from \([0,0.9]\).
All remaining architecture and optimization settings follow \texttt{AFABench}'s \ODIN{} configuration.

\paragraph{Method adaptations.}
For every method trained on restricted data, acquisitions unavailable in a training instance are removed from its legal action set throughout the trajectory.
We evaluate \ODIN{} in its model-free configuration by disabling the additional synthetic-data generation used by its model-based variant.
For \JAFA{}, \OL{}, and model-free \ODIN{}, we additionally compare two state representations.
In the \emph{aliasing} variant, the availability mask is used only to block illegal actions and is not provided to the learned decision function.
In the \emph{filtering} variant, the availability of the remaining acquisitions is included in the state: \JAFA{} and \OL{} learn \(Q(s,m,a)\), while both the actor and critic of \ODIN{} condition on \(m\).
The variants are otherwise identical.

For \DIME{} and \GDFS{}, we retain the standard AFABench architectures and objectives, but initialize unavailable acquisitions as already exhausted in their selection masks.
\AACO{} requires further adaptation because its nearest-neighbor search compares a query against incomplete training instances.
For a query with acquired set \(S\) and a training instance with availability mask \(\bar m^{(i)}\), we use the shared-support distance
\[
	d_i^2(s)
	=
	\frac{
	\sum_{j\in S}\bar m_j^{(i)}
	\bigl(x_j-x_j^{(i)}\bigr)^2
	}{
	\sum_{j\in S}\bar m_j^{(i)}
	}.
\]
Training instances with no shared support are excluded.
Normalizing by the shared-support size prevents instances with fewer comparable features from appearing artificially close, and candidate losses are evaluated using only features available in each neighbor.

\paragraph{Training, budgets, and metrics.}
All remaining settings follow AFABench, except for RL training lengths selected by convergence on complete data.
This yielded $2{,}000$ batches for \OL{} and $4{,}000$ for \JAFA{} and model-free \ODIN{}, with $256$ frames per batch.
These lengths are fixed across missingness mechanisms, rates, state representations, and restoration treatments.
We report each dataset at its largest predefined hard budget: $10$ for CUBE, ACTG175, and NHANES mortality; $14$ for CUBE-NM; $7$ for CUBE non-uniform and Heart Disease; and $15$ for Diabetes and MiniBooNE.
We report accuracy on the balanced synthetic datasets and macro-F1 on the real-world datasets, with means over the five dataset splits.

\subsection{Pseudocount Smoothing in the Synthetic Controlled Study}
\label{app:pseudocount}
The synthetic controlled study uses a pseudocount of $1/2$ for each binary outcome.
For $N_1$ occurrences of outcome one among $N>0$ compatible instances, the change from the empirical frequency is
$$
	\left|\frac{N_1+1/2}{N+1}-\frac{N_1}{N}\right|=\frac{|N/2-N_1|}{N(N+1)}\leq\frac{1}{2(N+1)}.
$$
The fixed binary predictor's expected zero-one loss is either the label-one probability or its complement, so the same bound applies to the stopping-value estimate.
On the count event of the proof of \Cref{thm:data-requirements} (\Cref{sec:thm_proof_step3}), $N\geq nqP_s/2$.
At a state with sufficient expected support, we have $nqP_s \geq 8\Lambda_J$ and $\Lambda_J \geq \log 24 > 1$, so $\Lambda_J nqP_s > 1$ and the additional error satisfies
$$
	\frac{1}{2(N+1)}\leq\frac{1}{nqP_s}\leq\sqrt{\frac{\Lambda_J}{nqP_s}}.
$$
Adding this to the Hoeffding deviation $\ell_{\max}\sqrt{\Lambda_J/(nqP_s)}$ of that proof at most doubles it, and $4\Lambda_J\leq12\Lambda\leq24\Lambda$.
Thus, \Cref{eq:proof-local-error} still holds.
For smaller counts, including $N=0$, the smoothed probability remains in $[0,1]$, so the same bounded-error argument applies.
The remainder of that proof is unchanged, giving the same sufficient sample-size bounds for the smoothed filtering and restoration estimates used in the synthetic controlled study.

\subsection{Computational Cost of Restoration}
\label{app:compute}

Generative restoration introduces an additional model, but that model is trained once per dataset split and missingness setting and then reused across downstream AFA methods.
Across the runs in \Cref{fig:compute}, the median generative-to-restricted wall-clock ratio is $1.15$; $59\%$ of pairs are within $1.25\times$ and $74\%$ within $1.5\times$.
The amortized PVAE fitting and restoration steps contribute a median of $29.3$ seconds per trained method.
Thus, the additional computational cost is typically modest relative to downstream AFA training, although the overhead varies substantially across methods.

\begin{figure}[tbp]
	\centering
	\includegraphics[width=\textwidth]{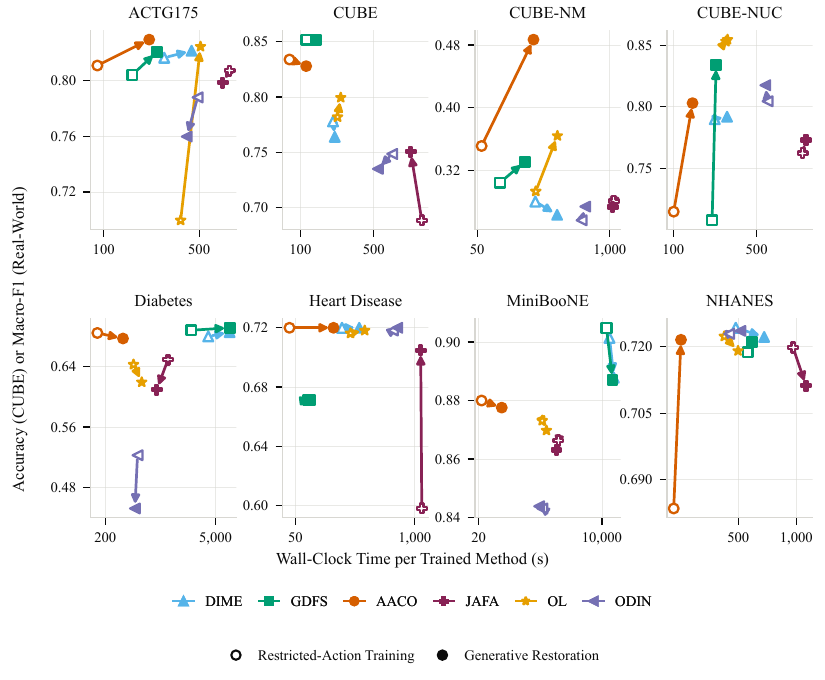}
	\caption{Predictive performance against wall-clock time per trained method.
		Each arrow points from restricted training to generative restoration for the same dataset and method.}
	\label{fig:compute}
\end{figure}

\subsection{Results beyond MCAR}
\label{app:other-missingness}

The main text focuses on MCAR, where the theoretical comparison is sharpest.
\Cref{fig:summary-grid} reports the restricted training and restoration comparison for all four mechanisms (MCAR, MAR, MNAR (logistic), and MNAR (self-masking)).
We observe similar patterns as in \Cref{fig:main-summary}, namely that missingness can have little effect in some settings and substantial effect in others, and restoration does not uniformly improve every method.

\begin{figure}[p]
	\centering
	\includegraphics[width=\textwidth,height=\dimexpr\textheight-3\baselineskip\relax,keepaspectratio]{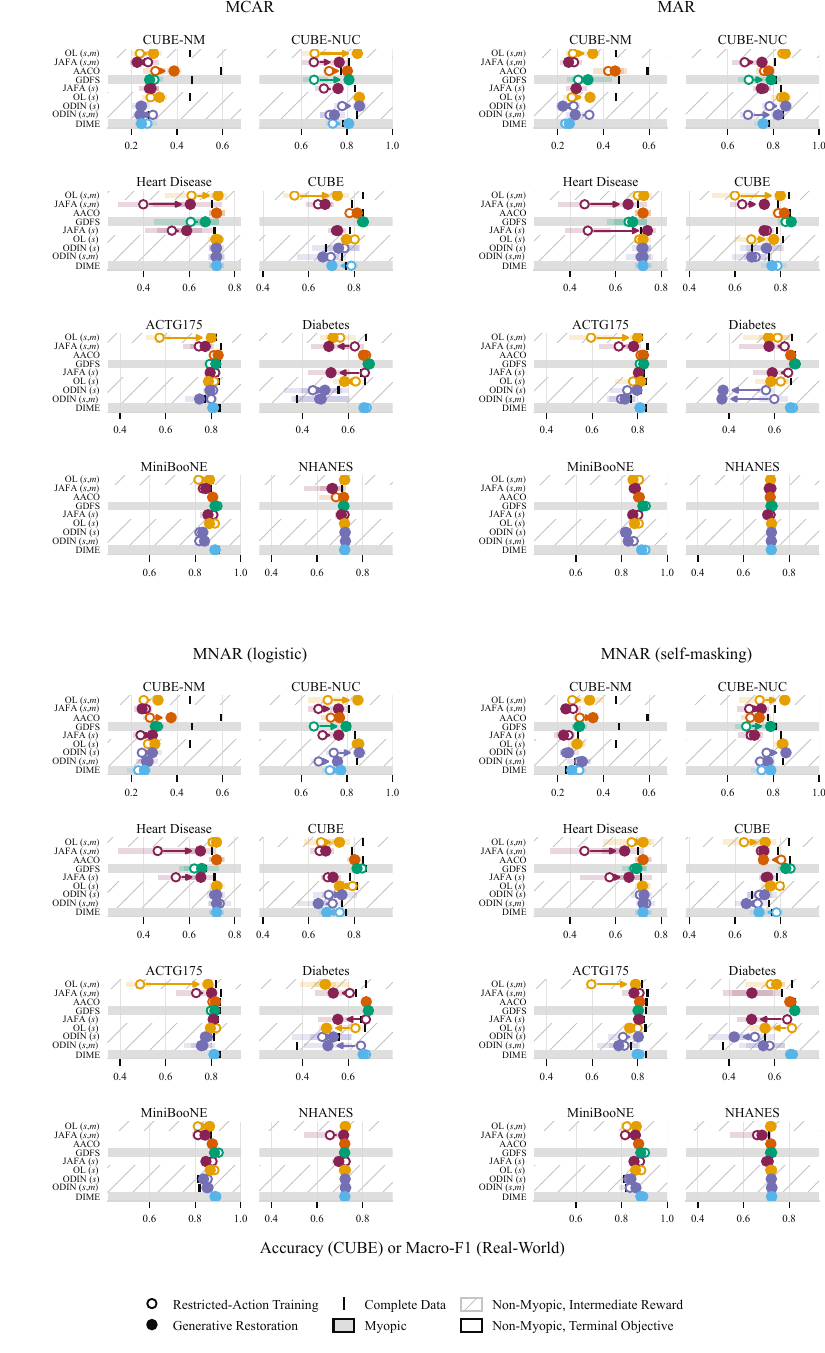}
	\caption{Restricted training against generative restoration at $p=0.7$ under all four missingness mechanisms.}
	\label{fig:summary-grid}
\end{figure}

To separate the amount of damage caused by missingness from the amount recovered by restoration, \Cref{fig:law-grid} plots restoration gain against missingness damage for each method and mechanism.
The fitted rays use cells with damage at least $0.01$, so their slopes summarize recovery only where there is a non-negligible gap to recover.

\begin{figure}[p]
	\centering
	\includegraphics[width=\textwidth]{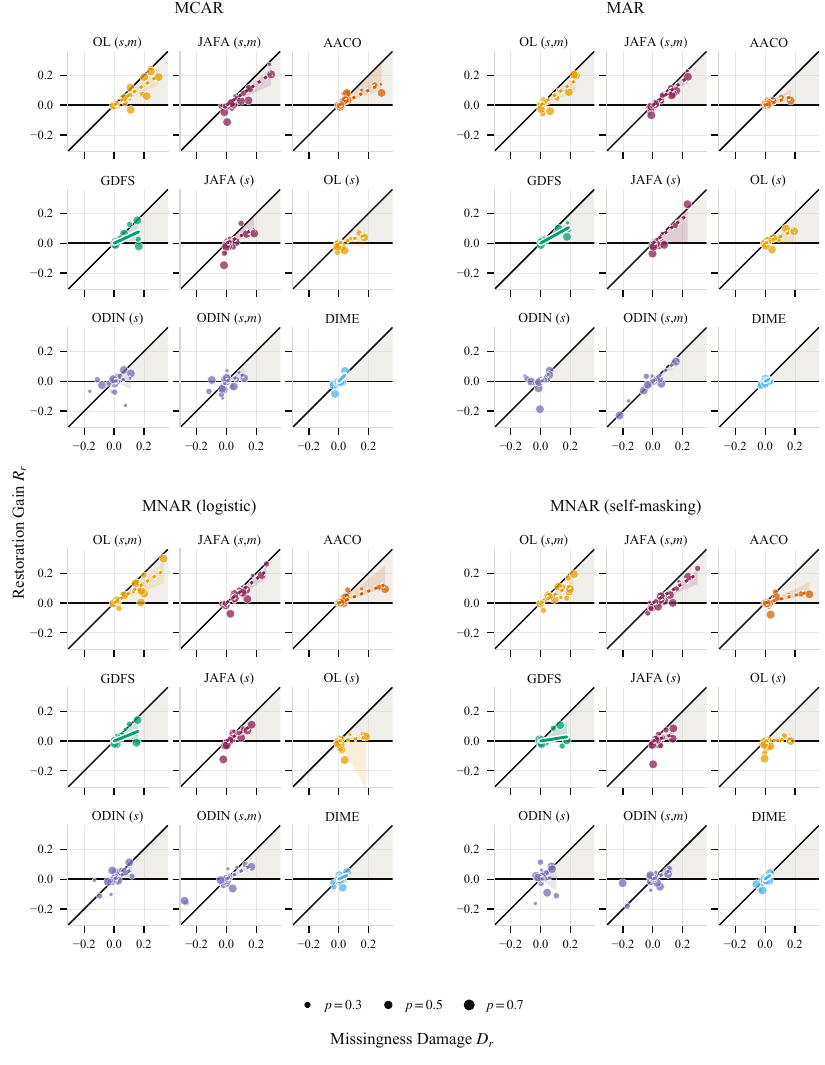}
	\caption{Restoration gain against missingness damage, one panel per method and one block per mechanism, over all eight datasets and three rates.}
	\label{fig:law-grid}
\end{figure}